\documentclass{article}

    \PassOptionsToPackage{numbers, compress}{natbib}

 \usepackage[main, final]{neurips_2026}

\usepackage[utf8]{inputenc} 
\usepackage[T1]{fontenc}    
\usepackage{hyperref}       
\usepackage{url}            
\usepackage{booktabs}       
\usepackage{amsfonts}       
\usepackage{nicefrac}       
\usepackage{microtype}      
\usepackage{xcolor}         
\usepackage{microtype}
\usepackage{booktabs, multirow,threeparttable}
\usepackage{graphicx}
\usepackage{placeins}   
\usepackage{tabularx}
\usepackage{array}
\usepackage{colortbl}
\usepackage{xcolor}
\usepackage{makecell}
\usepackage[normalem]{ulem}
\usepackage{subcaption}
\usepackage{adjustbox}
\usepackage{booktabs} 
\usepackage{xcolor}
\usepackage{wrapfig}
\newcommand{\method}{\texttt{CORAL}}
\usepackage{subcaption}
\usepackage{colortbl}
\usepackage[table]{xcolor}

\usepackage{amsmath}
\usepackage{amssymb}
\usepackage{mathtools}
\usepackage{amsthm}
\usepackage{enumitem}

\theoremstyle{plain}
\newtheorem{theorem}{Theorem}[section]

\newtheorem{lemma}[theorem]{Lemma}

\theoremstyle{definition}

\theoremstyle{remark}

\title{Mitigating Object Hallucination in Large Vision-Language Models via False Discovery Controlled Visual Data Splitting}

\author{%
\begin{tabular}{@{}
  >{\centering\arraybackslash}p{0.31\textwidth}
  @{\hspace{0.02\textwidth}}
  >{\centering\arraybackslash}p{0.31\textwidth}
  @{\hspace{0.02\textwidth}}
  >{\centering\arraybackslash}p{0.31\textwidth}
@{}}
 Chang Liu
&
 Yu Tian
&
 Rui Xie\thanks{Corresponding author.}
\\
\normalfont School of Data, Mathematical, and Statistical Sciences
&
\normalfont Institute of Artificial Intelligence
&
\normalfont School of Data, Mathematical, and Statistical Sciences
\\
\normalfont University of Central Florida
&
\normalfont University of Central Florida
&
\normalfont University of Central Florida
\\
\texttt{chang.liu@ucf.edu}
&
\texttt{yu.tian2@ucf.edu}
&
\texttt{rui.xie@ucf.edu}
\end{tabular}%
}

\begin{document}

\maketitle

\begin{abstract}
Multiple object hallucination, where large vision-language models (LVLMs) generate objects not supported by the visual input, is a persistent challenge caused by visual uncertainty during decoding. 
Existing methods reduce hallucinations using contrastive signals, but they rely on heuristics and lack principled control of false positives at the image level. 
To address this, we propose False Discovery Rate-\textbf{CO}nt\textbf{R}ol of H\textbf{AL}lucination ($\method$), a training-free framework that models visual uncertainty using an uncertainty-aware visual {data splitting} strategy and leverages {mirror statistic} to quantify visual contrast during decoding. By computing mirror statistic from paired, symmetrically perturbed visual inputs, $\method$ estimates spurious object predictions and sets a data-driven threshold to control the expected fraction of false discoveries per image, suppressing hallucinations while retaining high power for truly grounded objects. 
The framework is flexible, supports multiple LVLMs, and mitigates hallucinations without retraining or supervision. Extensive experiments on multiple benchmarks with several evaluation metrics demonstrate that $\method$ consistently outperforms state-of-the-art methods, providing more reliable and robust hallucination control.
Code is available at: \url{https://changliu1993-cl.github.io/CORAL/}
\end{abstract}

\section{Introduction}
 \begin{wrapfigure}{r}{0.5\linewidth}
    \centering
    \vspace{-0.17in}
    \includegraphics[width=\linewidth]{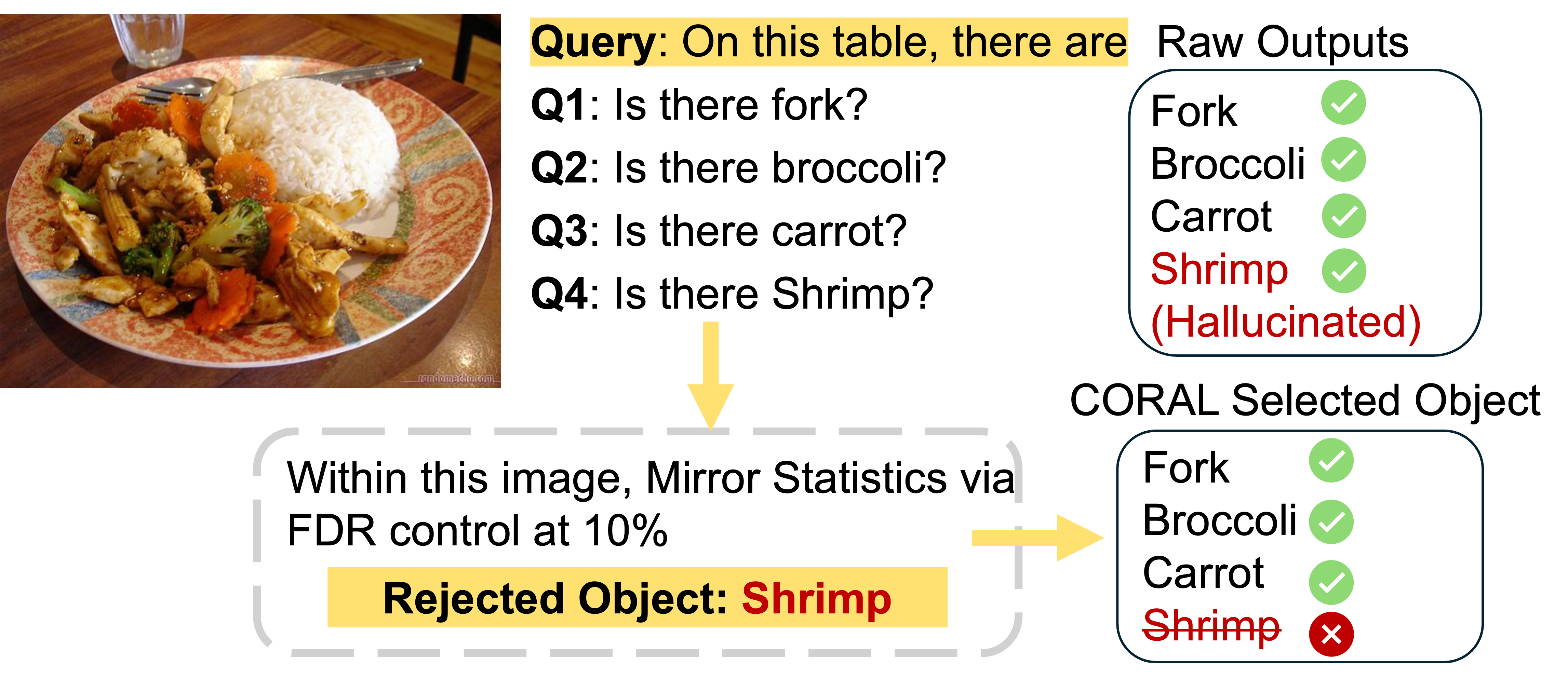}
    \caption{\textbf{Hallucination under multiple queries.} The model may generate plausible but hallucinated objects (e.g., \textit{``shrimp''}, in \textcolor{red}{red}). $\method$ performs image-level selection, rejecting hallucinations while retaining visually grounded objects under False Discovery Rate (FDR) control.}
    \label{fig:multi_object}
    \vspace{-0.15in}
 \end{wrapfigure}

Large Vision-Language Models (LVLMs) extend large language models with pretrained vision encoders to jointly reason over visual and textual inputs, enabling the generation of linguistically fluent and semantically aligned outputs grounded in visual content~\citep{liu2023visual,li2025benchmark}. This unified multimodal framework has led to strong performance across core vision–language tasks such as image captioning~\citep{li2023blip}, visual question answering~\citep{rohrbach2018object}, and multimodal dialogue~\citep{chen2023shikra}, further driven by advances in model architectures~\citep{zhu2023minigpt,ye2023mplug,gao2023llama}, multimodal alignment techniques~\citep{yu2024rlhf,deng2024enhancing}, and large-scale benchmarks~\citep{lu2023mathvista,xu2024lvlm}. Despite this progress, LVLMs remain vulnerable to hallucinations, where models produce confident yet visually ungrounded descriptions of objects, attributes, or relationships absent from the input image~\citep{liu2023visual,zhu2023minigpt,ye2023mplug,gao2023llama,huang2025survey}. Such hallucinations are pervasive across tasks, including image captioning~\citep{lin2014microsoft} and visual question answering~\citep{schwenk2022okvqa}, and persist under both closed-set and open-ended evaluation protocols~\citep{li2023evaluating,wang2023evaluation,zhou2023analyzing}.
{As shown in Fig.~\ref{fig:multi_object}, LVLMs may produce plausible yet hallucinated objects when answering multiple queries about the same image. Evaluating each query independently is insufficient, as hallucinated objects (e.g., \textit{shrimp}) can be retained alongside real ones. }

\begin{figure*}[htbp]
    \centering
    \includegraphics[width=\linewidth]{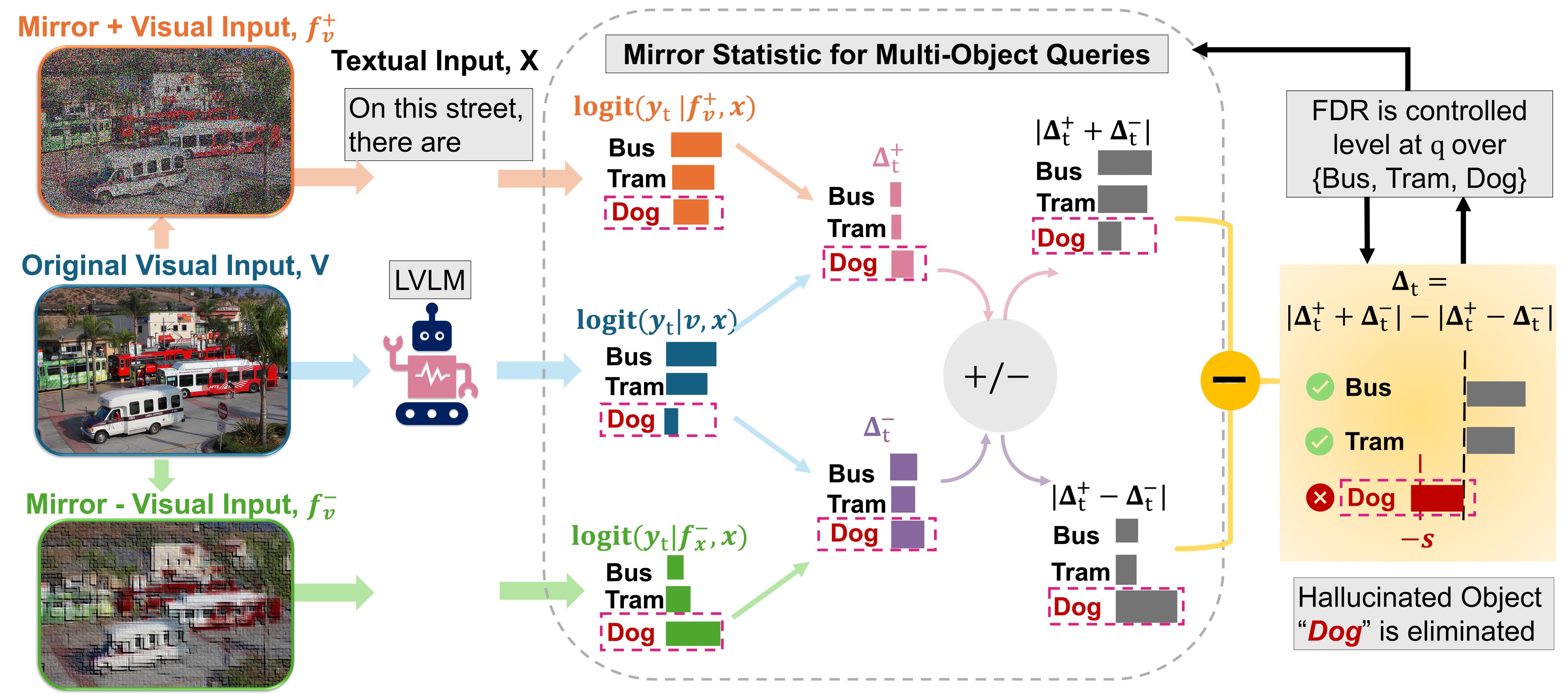}
    \caption{Illustration of $\method$, which applies visual uncertainty splitting and mirror statistic to control the FDR of hallucinated objects. The hallucinated object “Dog” (\textcolor{red}{red}) is removed under FDR control.
    }
    \label{fig:overview}
\end{figure*}

Object hallucination in LVLMs can be viewed as a failure to properly account for visual uncertainty during generation~\citep{zhou2023analyzing}. When visual inputs are reliable and unambiguous, token predictions are mainly driven by visual features extracted from the image~\citep{bigverdi2025perception}. In contrast, when visual evidence is weak or ambiguous, LVLMs tend to rely more on language priors, producing responses that are statistically plausible but not necessarily supported by the visual input~\citep{chang2025context}. As a result, objects with higher visual uncertainty are more likely to be hallucinated during the decoding process.

 \begin{wrapfigure}{r}{0.5\linewidth}
 \vspace{-0.25in}
    \centering
  \includegraphics[width=\linewidth]{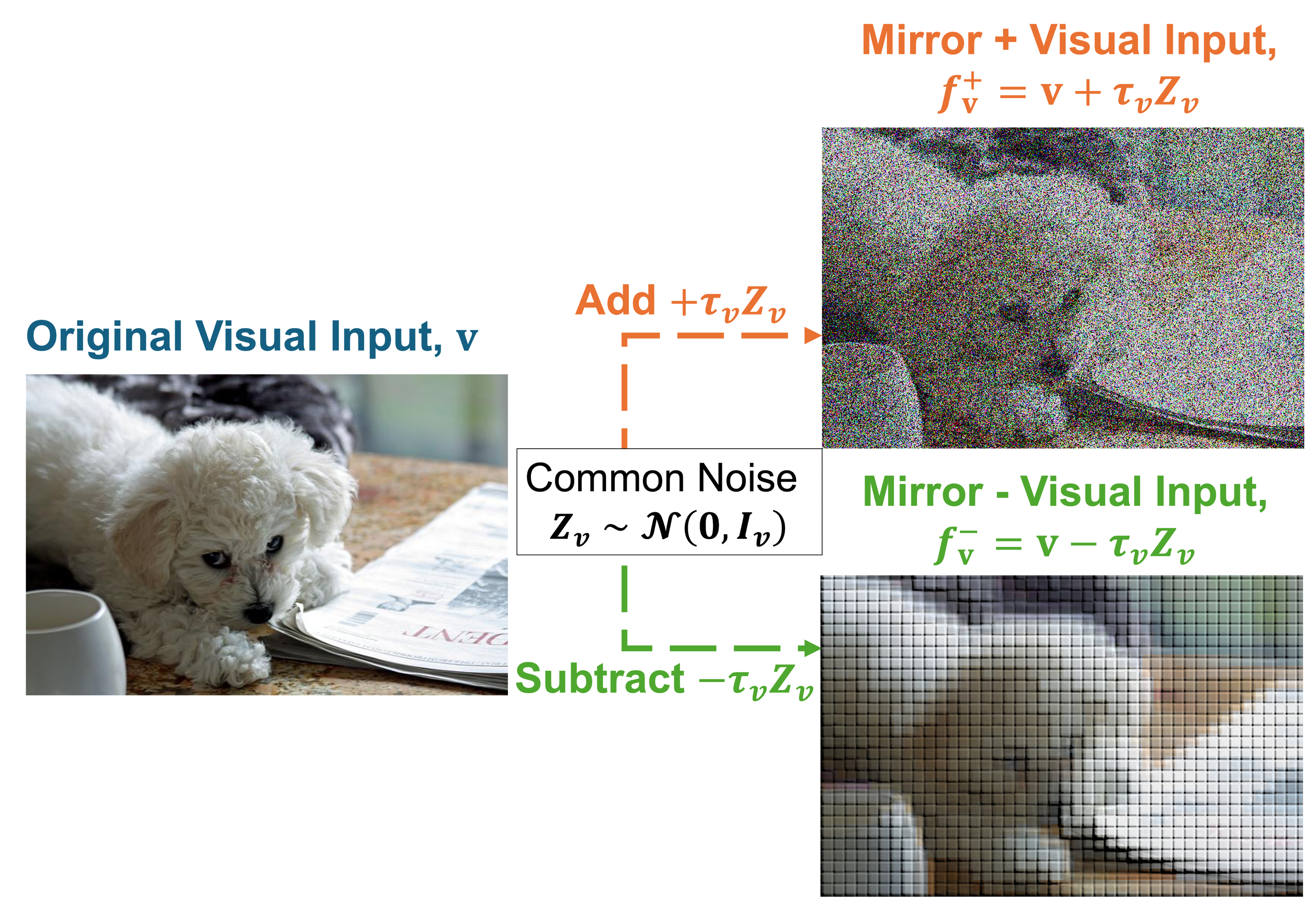}
\caption{\textbf{Uncertainty-aware visual data splitting}. Paired, symmetrically perturbed visual views are generated from a shared noise source, producing contrastive views for the mirror statistic construction and thereby facilitating the separation of visual signal from noise.
}
    \label{fig:ds}
     \vspace{-0.15in}
\end{wrapfigure}
Existing object hallucination mitigation methods largely rely on closed-set object existence queries, evaluating each object independently~\citep{lin2014microsoft, schwenk2022okvqa, hudson2019gqa}. Strategies such as image guidance~\citep{zhao2024mitigating} or global-local attention assembly~\citep{an2025mitigating} operate at the level of individual queries and require manually controlled signals. Contrastive approaches, such as Visual Contrastive Decoding (VCD)~\citep{leng2024mitigating}, probe visual uncertainty by comparing outputs from original and perturbed visual inputs, but still do not provide holistic, image-level control.
From a statistical perspective, contrastive visual analysis can be understood through the lens of \textit{data splitting}: the visual data are split into paired, symmetrically perturbed views from a shared noise
and then used in separate stages of the analysis to quantify uncertainty and control errors~\citep{dai2023false}.
As illustrated in Fig.~\ref{fig:ds}, we introduce an \textbf{uncertainty-aware visual data splitting} strategy that constructs two paired
visual inputs from each image via symmetric perturbations, by inducing controlled variability in the LVLM perception process. The splitting is instantiated in the visual domain by generating symmetric views from a shared noise source, allowing consistent visual signals to be distinguished from noise-induced variability.

Viewed in this way, visual hallucination detection can be cast as an uncertainty-aware objective selection problem, whose goal is to distinguish visually grounded outputs (e.g., \textit{fork}, \textit{broccoli}, and \textit{carrot} in Fig.~\ref{fig:multi_object}) from those that are not grounded in the
visual input (e.g., the hallucinated \textit{shrimp}).

Leveraging the symmetry of data splitting, we construct \textbf{\textit{mirror statistic}}~\citep{xing2023controlling}, a data-driven score for estimating false discoveries without access to ground truth. 
Built from two splitted visual data,
mirror statistic reward objectives that exhibit consistent visual evidence and confidence across splits, while inconsistent or noisy objectives tend to cancel out due to symmetric positive and negative contributions (e.g., the hallucinated \textit{Dog} in Fig.\ref{fig:overview}). This property enables direct estimation of spurious visual signals and provides a principled mechanism to mitigate object hallucination by estimating and controlling the false discovery rate (FDR), 
defined here at the level of an entire image as the expected proportion of selected visual objectives that are not visually grounded, rather than at the level of individual object queries, 
 \vspace{-0.15cm}
\begin{align}\label{eq:fdr}
\text{FDR}
= \mathbb{E} \left[
\frac{
\#\{\text{hallucinated objects identified as grounded}\}
}{
\#\{\text{all identified objects}\}
}
\right].
\end{align}
{e.g., four objects are predicted for the same Fig.~\ref{fig:multi_object}, one of which (\textit{shrimp}) is hallucinated, yielding an FDR of 25\%.} 

As a consequence of leveraging mirror statistic for FDR control, we bounds the expected fraction of falsely selected visual objects while maintaining high power, the probability of retaining truly visually grounded objects:
 \vspace{-0.2cm}
\begin{align}\label{eq:power}
\text{Power}
= \mathbb{E} \left[
\frac{\#\{\text{truly grounded objects correctly identified}\}}
     {\#\{\text{all truly grounded objects}\}}
\right].
\end{align}
In Fig.~\ref{fig:multi_object}, our proposed method successfully retains all visually grounded objects (\textit{fork}, \textit{broccoli}, and \textit{carrot}), achieving 100\% power for this image. The same figure also illustrates a common failure mode of LVLMs, where a hallucinated object (``\textit{shrimp}'') is predicted alongside genuine objects within a single image, allowing visually unsupported and grounded predictions to co-exist.
In this setting, false discovery rate (FDR) control limits the proportion of hallucinated objects retained in the output, while high power naturally follows by preserving objects that are genuinely supported by visual evidence rather than suppressing them indiscriminately. This balance enables effective image-level control of object hallucination.
Motivated by this observation, we propose $\method$, a false discovery rate \textbf{CO}nt\textbf{R}ol of object h\textbf{AL}lucination framework that controls the proportion of hallucinated objects at the image level while maintaining high power (Fig.~\ref{fig:overview}). Our main contributions are summarized as follows:
\begin{itemize}[leftmargin=*,nosep]
\item $\method$ employs an \textbf{uncertainty-aware visual data splitting} strategy to construct two paired visual views via symmetric perturbations of each image, inducing controlled variability and enabling stochastic contrasts for object hallucination control.
\item $\method$ leverages \textbf{mirror statistic} constructed from split visual inputs to estimate and control the false discovery rate at the image level, enabling principled suppression of hallucinated objects while retaining high power. 
\item $\method$ is a \textbf{training-free} approach that mitigates object hallucination through data-splitting–based FDR control, incurring low computational overhead compared to training-based alternatives.
\end{itemize}


\section{Related Work and Preliminaries} 
Controlling the false discovery rate is a fundamental problem in multiple hypothesis testing~\citep{benjamini1995controlling}. 
The knockoff framework~\citep{barber2015controlling} introduces synthetic variables to construct feature-level test statistics with provable false discovery rate (FDR) control for variable selection, and has been extended to high-dimensional settings through Model-X knockoffs~\citep{candes2018panning}. Similarly, Gaussian mirrors are developed for feature selection by constructing paired statistics that exhibit sign symmetry under the null hypothesis, such that positive and negative values are approximately balanced for null features~\citep{xing2023controlling}. This symmetry enables estimation of the number of false discoveries from the corresponding negative statistics, thereby facilitating FDR control~\citep{dai2023false, xing2023controlling, dai2023scale}. Both knockoff-based and mirror-statistic approaches are specifically designed for variable or feature selection tasks, relying on paired variables with exchangeability properties to control the selection FDR while maintaining statistical power, particularly in linear models.
Inspired by this line of work, we employ an uncertainty-aware visual data splitting strategy that constructs paired visual inputs, allowing mirror statistic to be applied for hallucination control in LVLMs. 


Existing approaches to object hallucination in LVLMs focus on either model adaptation or post hoc correction. 
Fine-tuning-based methods improve grounding using curated datasets~\citep{liu2023mitigating, xie2024v, awais2025foundation, zhou2023analyzing, gunjal2024detecting}, while others rely on external language models to refine outputs~\citep{zhai2023halle, yin2024woodpecker, li2026analyzing}. 
Recent work further suggests that hallucination primarily arises from the language modeling head, which fails to fully leverage visually grounded representations despite their presence in intermediate features~\citep{li2026analyzing}.
However, these approaches are often resource-intensive and difficult to scale, requiring large amounts of human annotation or reliance on proprietary models~\citep{tonmoy2024comprehensive,lin2024towards}. 
This has motivated lightweight alternatives, particularly uncertainty-based methods that detect hallucinations from model confidence or variability~\citep{huang2025survey, yang2023uncertainty}. Several recent approaches mitigate object hallucination by enhancing visual grounding through contrastive or multi-view visual signals during generation~\citep{xiu2026comprehensive}. Visual Contrastive Decoding (VCD)~\citep{leng2024mitigating} leverages contrastive visual inputs to discourage text generation unsupported by the image. Along a similar line, the Assembly of Global and Local Attention (AGLA)~\citep{an2025mitigating} constructs complementary visual representations from multiple views and evaluates token-level consistency across visual perspectives using contrastive signals, thereby reducing hallucinated outputs. Mitigating Hallucination via Image-Grounded Guidance (MARINE)~\citep{zhao2024mitigating} further extends this paradigm by incorporating multi-image reasoning and cross-modal alignment mechanisms to refine visual–textual consistency and suppress hallucination. Recent work also shows that hallucination can be exacerbated under distribution shifts such as stylized images, where visual cues deviate from natural image statistics and become less reliable~\citep{li2026saver}. As this setting focuses on stylized distribution shifts, it is not directly comparable to standard natural-image benchmarks used in our evaluation.



While effective, existing approaches largely rely on logit comparisons during decoding to suppress visually unsupported outputs, without explicit control over hallucination. 
This limitation is amplified in multi-object scenes with heterogeneous visual evidence. 
In contrast, our $\method$ formulates hallucination mitigation as an FDR-controlled selection problem, providing explicit image-level control over hallucinated objects while preserving power for grounded ones. 
We construct paired visual views via uncertainty splitting and apply mirror statistic to quantify consistent visual evidence, resulting in a principled, training-free framework for hallucination control.
\vspace{-0.1in}
\paragraph{Preliminaries: Decoding of LVLMs}
We consider a Large Vision-Language Model (LVLM) parameterized by $\theta$. 
The model takes a text prompt $\mathbf{x} = [x_1, \ldots, x_n]$, where each $x_i$ is a prompt token, and visual inputs $\mathbf{v}$, which provide contextual visual information. The model generates a response sequence $\mathbf{y} = [y_1, \ldots, y_m]$ autoregressively according to the conditional distribution $p_\theta(\mathbf{y} \mid \mathbf{v}, \mathbf{x})$, which factorizes as
\(
p_\theta(\mathbf{y} \mid \mathbf{v}, \mathbf{x}) = \prod_{t=1}^{m} p_\theta(y_t \mid \mathbf{v}, \mathbf{x}, \mathbf{y}_{<t}),
\)
where $\mathbf{y}_{<t} = [y_1, \ldots, y_{t-1}]$ for $t > 1$ and is empty for $t = 1$. 
At each step $t$, the next token is sampled from a categorical distribution defined by the model logits,
\(
p_\theta(y_t \mid \mathbf{v}, \mathbf{x}, \mathbf{y}_{<t}) = \mathrm{softmax}\!\left(\mathrm{logit}_\theta(\cdot \mid \mathbf{v}, \mathbf{x}, \mathbf{y}_{<t})\right).
\)
We can further view in the logit space, where the $t$-th token is sampled from the logit space by $\mathbf{y} \propto \exp \text{logit}_{\theta} (\mathbf{y} | \mathbf{v}, \mathbf{x}, \mathbf{y}_{<t})$.

In the decoding phase of LVLMs, object hallucination usually appears when probabilities are erroneously allocated to tokens that do not align with the presented visual input $\mathbf{v}$. The main causes of this problem include statistical biases inherent in training data~\citep{agarwal2020towards, agrawal2016analyzing, goyal2017making}, and over-reliance on language priors embedded within large language models (LLMs) used as decoders~\citep{gunjal2024detecting, li2023evaluating, yan2023overcoming, zhibo2023overcoming}. 



\section{$\method$ Methodology} \label{main:methodology}
To characterize and control object hallucination \emph{globally} at the image level rather than on individual object queries, we introduce \textbf{uncertainty-aware visual data splitting}. An LVLM typically produces multiple semantic outputs from a single image, each supported by varying degrees of visual evidence, motivating the need for global regulation of hallucinated content within the same decoding context. Our approach constructs paired, symmetrically perturbed views of the same visual input, inducing controlled visual uncertainty while preserving semantic content. This design disentangles visual dependence from language priors and enables assessment of whether token generation is genuinely grounded in visual evidence.
Building on this formulation, we cast hallucination mitigation as a false discovery rate (FDR) control problem using \textbf{mirror statistic}, a distribution-free procedure based on paired mirror comparisons that enables reliable image-level inference with principled error control over object hallucinations.
\subsection{Uncertainty-Aware Visual Data Splitting}
Visual uncertainty provides informative signals about the reliability of visual grounding in large vision language models rather than being merely a source of error~\citep{liu2024survey}. To leverage this information, we adopt a data splitting strategy~\citep{dai2023false} that introduces symmetric perturbations to the visual input. 
For each image, we apply an \emph{uncertainty-aware visual data splitting} procedure to construct a paired set of mirror views that capture visual uncertainty during decoding. Let $Z_v \sim \mathcal{N}(0, \mathbf{I}_v)$ be a Gaussian noise vector matching the dimensionality of the visual input $\mathbf{v}$. We generate two symmetrically perturbed visual features via sign-symmetric randomization: 
 \vspace{-0.1cm}
\begin{equation} \label{eq:ds}
    f_\mathbf{v}^{+} = \mathbf{v} + \tau_v Z_v, \qquad
    f_\mathbf{v}^{-} = \mathbf{v} - \tau_v Z_v ,
\end{equation}
where $\tau_v > 0$ controls the perturbation magnitude and can be calibrated on a held-out validation set for a target FDR. The calibrated $\tau_v$ is fixed for each visual backbone across all experiments, with sensitivity analyses provided in Appendix Fig.~\ref{fig:ablation_fdr_tau} and Fig.~\ref{fig:ablation_pope_tau}.
The resulting perturbed views preserve the same underlying visual content while introducing symmetric stochastic deviations along a shared random direction. 
The uncertainty-aware visual data splitting is designed to create a paired, sign-symmetric contrast that (i) amplifies visually grounded signals while canceling injected noise, and (ii) produces symmetric evidence patterns in the absence of visual grounding, a property that underlies the mirror statistic introduced in the next subsection and enables principled control of hallucinated objects.

\subsection{Mirror Statistic}\label{mirror}
Given the original visual input $\mathbf{v}$ and its two mirror views $f_\mathbf{v}^{+}$ and $f_\mathbf{v}^{-}$, we evaluate the LVLM under the same textual prompt $\mathbf{x}$.
During autoregressive decoding, for the generated output token $y_t$ at step $t$, we compare the model’s logits under the original and mirror view inputs. Specifically, we define the logit differences,
\begin{align}
    \Delta_t^{+}
    &= \text{logit}_{\theta}\!\left(y_t \mid \mathbf{v}, \mathbf{x}, \mathbf{y}_{<t}\right)
     - \text{logit}_{\theta}\!\left(y_t \mid f_\mathbf{v}^{+}, \mathbf{x}, \mathbf{y}_{<t}\right), \\
    \Delta_t^{-}
    &= \text{logit}_{\theta}\!\left(y_t \mid \mathbf{v}, \mathbf{x}, \mathbf{y}_{<t}\right)
     - \text{logit}_{\theta}\!\left(y_t \mid f_\mathbf{v}^{-}, \mathbf{x}, \mathbf{y}_{<t}\right).
\end{align}
These quantities capture how the model’s predictive behavior responds to sign-symmetric perturbations of the visual input along a shared noise realization. Since $f_\mathbf{v}^{+}$ and $f_\mathbf{v}^{-}$ preserve the same underlying visual content and differ only in perturbation sign, comparing $\Delta_t^{+}$ and $\Delta_t^{-}$ \textcolor{black}{logit contrasts that} reveal whether the model exhibits approximately symmetric responses to opposing visual perturbations.

Following the Gaussian mirrors construction \cite{xing2023controlling}, we define the mirror statistic for visual uncertainty
\begin{equation} \label{eq:mirror}
    \Delta_t
    =
    \bigl|\Delta_t^{+} + \Delta_t^{-}\bigr|
    -
    \bigl|\Delta_t^{+} - \Delta_t^{-}\bigr|.
\end{equation}

The mirror statistic $\Delta_t$ has two components. 
The first part, $\big|\Delta_t^{+} + \Delta_t^{-}\big|$, captures signal strength through consistent responses across symmetric perturbations, 
while the second part, $\big|\Delta_t^{+} - \Delta_t^{-}\big|$, reflects the noise cancellation effect. Their difference therefore provides a contrast between reliable visual evidence and perturbation-induced randomness. 

\textcolor{black}{Intuitively, when the generated token is not visually grounded,
its logit response to the two sign-symmetric visual perturbations
should not contain a stable visual-evidence component.
In this null case, the paired differences $\Delta_t^{+}$ and
$\Delta_t^{-}$ fluctuate primarily due to perturbation-induced
noise, and the resulting mirror statistic is expected to have
an approximately symmetric distribution around zero, rather
than being always negative.
In contrast, when the generated token is visually grounded,
the two mirror-view differences are expected to share a stable
visual-response component.
This component is reinforced in
$\left|\Delta_t^{+} + \Delta_t^{-}\right|$
and reduced in
$\left|\Delta_t^{+} - \Delta_t^{-}\right|$,
so visually grounded tokens tend to produce larger, more
positive values of $\Delta_t$.
As a result, $\Delta_t$ separates visually grounded outputs from noise-driven ones, enabling false discovery estimation from the negative tail of mirror statistic and principled FDR control with high power for true visual signals.}

\subsection{Controlling Hallucinations via FDR} \label{subsec:FDR}
We utilize the false discovery rate to control hallucinated tokens among the outputs generated by LVLMs. By leveraging the symmetric behavior of mirror statistic under sign-perturbed visual inputs, we derive a data-driven threshold that uses negative mirror evidence to estimate spurious visual responses, enabling FDR-controlled selection while maintaining high power to retain truly visually grounded tokens. 

Let $\mathcal{S}$ denote the set of decoding tokens generated for a given visual input, where each token $y_t \in \mathcal{S}$ is associated with mirror statistic $\Delta_t$, and let $\mathcal{S}_0 \subseteq \mathcal{S}$ denote the null subset whose generated token is not reliably grounded in the visual input (i.e., hallucinated tokens). Each token thus corresponds to a hypothesis testing whether it is visually grounded. 
A false discovery occurs when a token $\ y_t \in \mathcal{S}_0$  (\textit{i.e.}, a hallucinated token not reliably grounded in the visual input) is incorrectly identified as visually grounded by rejecting its null hypothesis.



\textcolor{black}{we state the required condition directly
as approximate conditional sign-flip invariance
under the null. Define
\[
H_{0t}:
\text{no systematic visual evidence supports token } y_t,
\]
and let
\[
\mathcal{G}_t
=
\sigma(\mathbf{v},\mathbf{x},\mathbf{y}_{<t})
\]
denote the shared context consisting of the image,
textual input, and decoding history, conditional on which $Z_v$ remains random.
\begin{theorem}[Conditional null symmetry]
\label{thm1}
For $\Delta_t$ defined in Eq.~\eqref{eq:mirror}, if
\[
(\Delta_t^+,\Delta_t^-)\mid H_{0t},\mathcal{G}_t
\overset{d}{=}
(\Delta_t^+,-\Delta_t^-)\mid H_{0t},\mathcal{G}_t,
\]
then, for every $s>0$,
\begin{equation}
\Pr(\Delta_t\le-s\mid H_{0t},\mathcal{G}_t)
=
\Pr(\Delta_t\ge s\mid H_{0t},\mathcal{G}_t).
\end{equation}
\end{theorem}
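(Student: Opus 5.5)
The plan is to show that flipping the sign of the second coordinate maps $\Delta_t$ to $-\Delta_t$. Then the assumed conditional distributional invariance transfers directly to the distribution of $\Delta_t$. Write $\Delta_t = g(\Delta_t^+,\Delta_t^-)$ with the measurable map $g(a,b) = |a+b| - |a-b|$. The key algebraic observation is
\[
g(a,-b) = |a-b| - |a+b| = -g(a,b)\qquad\text{for all } a,b\in\mathbb{R}.
\]
So $g$ is odd in its second argument. (Equivalently, $g(a,b) = 2\,\mathrm{sign}(ab)\min(|a|,|b|)$, though oddness is all we need.)

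Next we push the hypothesis through $g$. Conditional on $H_{0t}$ and $\mathcal{G}_t$, the hypothesis says that $(\Delta_t^+,\Delta_t^-)$ and $(\Delta_t^+,-\Delta_t^-)$ have the same conditional law. Since equality in distribution is preserved under measurable maps, the same holds for their images under $g$:
\[
\Delta_t = g(\Delta_t^+,\Delta_t^-) \overset{d}{=} g(\Delta_t^+,-\Delta_t^-) = -\Delta_t \quad\text{given } H_{0t},\mathcal{G}_t .
\]
Thus $\Delta_t$ is conditionally symmetric about zero.

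Finally, fix $s>0$. Applying this symmetry to the Borel set $(-\infty,-s]$ gives
\[
\Pr(\Delta_t\le -s\mid H_{0t},\mathcal{G}_t) = \Pr(-\Delta_t\le -s\mid H_{0t},\mathcal{G}_t) = \Pr(\Delta_t\ge s\mid H_{0t},\mathcal{G}_t),
\]
which is the claim.

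There is no real analytic obstacle; the only point needing care is measure-theoretic. We should interpret the conditional equality in distribution via regular conditional distributions given $\mathcal{G}_t$ on the null event, or equivalently as the equality $\mathbb{E}[h(\Delta_t^+,\Delta_t^-)\mid\cdot] = \mathbb{E}[h(\Delta_t^+,-\Delta_t^-)\mid\cdot]$ almost surely for bounded measurable $h$. We then take $h = \mathbf{1}\{g(\cdot)\le -s\}$. Under that reading the transfer step is immediate, and the result holds for every $s>0$ (indeed for every $s$).
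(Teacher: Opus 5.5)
Your proof is correct and follows the paper's route. The paper's Lemma~\ref{lemma1} likewise writes $\Delta_t = M(\Delta_t^+,\Delta_t^-)$ with $M(a,b)=|a+b|-|a-b|$, uses the antisymmetry $M(a,-b)=-M(a,b)$ to carry the sign-flip invariance over to $\Delta_t \overset{d}{=} -\Delta_t$, and reads off the tail equality; your remark on interpreting the conditional equality via regular conditional distributions adds rigor the paper leaves implicit.
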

The proof follows from mirror antisymmetry
(Lemma~\ref{lemma1}).
For LVLMs, sign-flip invariance is an approximate
working assumption, not guaranteed by Gaussian
perturbations; uncorrelated contrasts are not required
(Appendix~\ref{app:symmetry}).
With suitable tail-count concentration
(Appendix~\ref{app:dependence}), approximate null
symmetry motivates, for $s>0$,
$
\#\{y_t\in\mathcal{S}_0:\Delta_t\ge s\}
\approx
\#\{y_t\in\mathcal{S}_0:\Delta_t\le-s\}
\le
\#\{y_t\in\mathcal{S}:\Delta_t\le-s\}.
$}


The $\mathrm{FDR}(s)$ at threshold $s$ can be estimated by
\begin{equation}\label{eq:fdr_math}
    \widehat{\mathrm{FDR}}(s)
    = \mathbb{E} \left[
    \frac{\#\{ y_t \mid \Delta_t \le -s \}}
         {\#\{ y_t \mid \Delta_t \ge s \} \vee 1} \right].
\end{equation}
Intuitively, the negative tail provides a data-driven estimate of how many hallucinated tokens are expected among the selected ones.

For any designated target FDR level $q \in (0,1)$, we choose a data-driven cutoff $T_q$ such that, 
\begin{equation}
    T_q
    =
    \min_s \big\{\widehat{\mathrm{FDR}}(s) \le q \big\},
\end{equation}
and retain the set $\{ y_t \in \mathcal{S} : \Delta_t \ge T_q \}$, suppressing the remaining tokens to mitigate hallucination.


Because mirror statistic are constructed independently for each decoding step, this design offers two advantages: (i) small, controllable perturbations reduce spurious correlations and improve power for identifying truly grounded tokens; and (ii) the computation is fully parallelizable across tokens, enabling scalability to outputs with many candidate tokens.

\section{Experiments}
\subsection{Experimental Setting}
\noindent\textbf{Models.} To demonstrate the broad applicability of our method across different LVLMs architectures, we apply and evaluate $\method$ to widely used models, including LLaVA-OneVision-7B~\citep{li2024llava}, Qwen2.5-VL-7B~\citep{bai2025qwen25vl}, and InternVL3-8B~\citep{zhu2025internvl3} (Table~\ref{tab:mlt_coco}), as well as LLaVA-v1.5~\citep{liu2023visual}, InstructBLIP-7B~\citep{dai2023instructblip}, and Qwen-VL-7B~\citep{bai2023Qwen} (Appendix~\ref{app:exp}).

\noindent\textbf{Dataset and Baselines.} In alignment with established evaluations from previous studies~\citep{leng2024mitigating}, we assess our method using the three dataset, MSCOCO \cite{lin2014microsoft}, A-OKVQA \cite{schwenk2022okvqa}, and GQA \cite{hudson2019gqa}. Each dataset includes three negative sample settings, i.e. random, popular, and adversarial. We compare our $\method$ with three state-of-the-art decoding methods and vanilla LVLMs without decoding techniques (regular), including VCD \cite{leng2024mitigating}, which distorts the image inputs to impose penalties on logit outputs; MARINE \cite{zhao2024mitigating}, which introduces image-grounded guidance, and AGLA \cite{an2025mitigating}, which assembles global features for response generation and local features for visual discrimination simultaneously. 
Results for MSCOCO are presented here, whereas results for A-OKVQA and GQA appear in Appendix~\ref{app:exp}.

\noindent\textbf{Metrics.} We evaluate all methods using FDR (Eq.~\eqref{eq:fdr}), power (Eq.~\eqref{eq:power}), and commonly used metrics including POPE and MME. We also report CHAIR metrics in the Appendix~\ref{app:exp}. Details are provided in Appendix~\ref{app:object_span}. 

Although $\method$ is primarily designed for object hallucination,  its token-level formulation, which operates directly on generated tokens without assuming a specific hallucination type, naturally generalizes to other types of hallucination, including attribute and relational hallucinations, as further validated empirically by MME results.

\textit{Polling-based Object Probing Evaluation} (POPE) \cite{li2023evaluating} evaluates closed-set choices. Multiple prompts with binary choices are formulated, each asking whether a specific object appears in the given image, such as ``Is there a chair in this image?" to answer ``yes" or ``no".  

\begin{table*}[ht]
\centering
\caption{Evaluation of overall FDR control and overall power across multiple LVLM architectures on MSCOCO (3000 runs). Overall FDR and power denote averages of per-image FDR and power. \textbf{Bold} indicates the best result and \uline{underline} indicates the second-best result. 
}
\label{tab:mlt_coco}
\setlength{\tabcolsep}{3pt}
\renewcommand{\arraystretch}{1.15}
\resizebox{\textwidth}{!}{%
\begin{tabular}{lcccccccc}
\toprule
\multirow{2}{*}{Method}
& \multicolumn{2}{c}{LLaVA-OneVision-7B}
& \multicolumn{2}{c}{Qwen2.5-VL-7B}
& \multicolumn{2}{c}{InternVL3-8B}
& \multicolumn{2}{c}{\textcolor{red}{Average}} \\
\cmidrule(lr){2-3}\cmidrule(lr){4-5}\cmidrule(lr){6-7}\cmidrule(lr){8-9}
& Overall FDR$\downarrow$ & Overall Power$\uparrow$
& Overall FDR$\downarrow$ & Overall Power$\uparrow$
& Overall FDR$\downarrow$ & Overall Power$\uparrow$
& Overall FDR$\downarrow$ & Overall Power$\uparrow$ \\
\midrule

\multicolumn{9}{l}{\textit{Random}}\\
\midrule
Regular
& 0.0935 ($\pm$ 0.0133) & 75.36 ($\pm$ 1.88)
& 0.0944 ($\pm$ 0.0114) & 77.43 ($\pm$ 1.22)
& 0.0946 ($\pm$ 0.0122) & 73.52 ($\pm$ 1.77)
& 0.0942 ($\pm$ 0.0111) & 75.44 ($\pm$ 1.44) 
\\
VCD (\citeyear{leng2024mitigating})
& 0.0892 ($\pm$ 0.0188) & 78.50 ($\pm$ 1.23)
& 0.0908 ($\pm$ 0.0132) & 79.14 ($\pm$ 1.53)
& 0.0899 ($\pm$ 0.0109) & 78.86 ($\pm$ 1.34)
& 0.0900 ($\pm$ 0.0145) & 78.83 ($\pm$ 1.31) \\
MARINE (\citeyear{zhao2024mitigating})
& 0.0846 ($\pm$ 0.0122) & 81.25 ($\pm$ 1.17)
& 0.0866 ($\pm$ 0.0116) & 80.05 ($\pm$ 1.60)
& 0.0889 ($\pm$ 0.0133) & 79.84 ($\pm$ 1.33)
& 0.0867 ($\pm$ 0.0124) & 80.38 ($\pm$ 1.21) \\
AGLA (\citeyear{an2025mitigating})
& \uline{0.0799 ($\pm$ 0.0223)} & \uline{84.92 ($\pm$ 1.11)}
& \uline{0.0821 ($\pm$ 0.0114)} & \uline{84.67 ($\pm$ 1.41)}
& \uline{0.0810 ($\pm$ 0.0144)} & \uline{80.92 ($\pm$ 1.33)}
& \uline{0.0810 ($\pm$ 0.0123)} & \uline{83.50 ($\pm$ 1.50)} \\
\rowcolor{blue!8}
\method (Ours)
& \textbf{0.0691 ($\pm$ 0.0113)} & \textbf{91.43 ($\pm$ 1.42)}
& \textbf{0.0718 ($\pm$ 0.0332)} & \textbf{94.60 ($\pm$ 1.49)}
& \textbf{0.0799 ($\pm$ 0.0122)} & \textbf{86.88 ($\pm$ 1.12)}
& \textbf{0.0736 ($\pm$ 0.0149)} & \textbf{90.97 ($\pm$ 1.44)} 
\\
\midrule

\multicolumn{9}{l}{\textit{Popular}}\\
\midrule
Regular
& 0.0971 ($\pm$ 0.0205) & 80.11 ($\pm$ 1.28)
& 0.0903 ($\pm$ 0.0111) & 77.01 ($\pm$ 1.64)
& 0.0937 ($\pm$ 0.0106) & 50.80 ($\pm$ 1.69)
& 0.0937 ($\pm$ 0.0141) & 69.31 ($\pm$ 1.54) 
\\
VCD (\citeyear{leng2024mitigating})
& 0.0893 ($\pm$ 0.0187) & 83.11 ($\pm$ 1.32) 
& 0.0911 ($\pm$ 0.0133) & 75.10 ($\pm$ 1.26) 
& 0.0875 ($\pm$ 0.0211) & 67.22 ($\pm$ 1.11) 
& 0.0893 ($\pm$ 0.0177) &  75.14 ($\pm$ 1.23) \\
MARINE (\citeyear{zhao2024mitigating})
& 0.0773 ($\pm$ 0.0134) & 85.22 ($\pm$ 1.55) 
& 0.0883 ($\pm$ 0.0220) & 78.28 ($\pm$ 1.25) 
& \uline{0.0701 ($\pm$ 0.0221)} & \uline{87.20 ($\pm$ 1.11)} 
& 0.0786 ($\pm$ 0.0192) & 83.57 ($\pm$ 1.30) \\
AGLA (\citeyear{an2025mitigating})
& \uline{0.0733 ($\pm$ 0.0177)} & \uline{88.13 ($\pm$ 1.69)} 
& \uline{0.0861 ($\pm$ 0.0233)} & \uline{82.22 ($\pm$ 1.51)} 
& 0.0741 ($\pm$ 0.0188) & 81.14 ($\pm$ 1.55) 
& \uline{0.0778 ($\pm$ 0.0199)} & \uline{83.83 ($\pm$ 1.58)} \\
\rowcolor{blue!8}
\method (Ours)
& \textbf{0.0552 ($\pm$ 0.0122)} & \textbf{90.42 ($\pm$ 0.24)} 
& \textbf{0.0802 ($\pm$ 0.0200)} & \textbf{84.10 ($\pm$ 1.15)} 
& \textbf{0.0675 ($\pm$ 0.0155)} & \textbf{88.93 ($\pm$ 1.21)} 
& \textbf{0.0676 ($\pm$ 0.0159)} & \textbf{87.82 ($\pm$ 0.87)} \\
\midrule

\multicolumn{9}{l}{\textit{Adversarial}}\\
\midrule
Regular
& 0.0988 ($\pm$ 0.0114) & 51.22 ($\pm$ 1.55) 
& 0.0973 ($\pm$ 0.0108) & 52.19 ($\pm$ 1.13) 
& 0.0953 ($\pm$ 0.0133) & 54.88 ($\pm$ 1.97) 
& 0.0971 ($\pm$ 0.0118) & 52.76 ($\pm$ 1.55) 
\\
VCD (\citeyear{leng2024mitigating})
& 0.0903 ($\pm$ 0.0115) & 56.32 ($\pm$ 1.56) 
& 0.0935 ($\pm$ 0.0109) & 53.16 ($\pm$ 1.77) 
& 0.0901 ($\pm$ 0.0131) & 60.59 ($\pm$ 1.26) 
& 0.0913 ($\pm$ 0.0118) & 56.69 ($\pm$ 1.53)  \\
MARINE (\citeyear{zhao2024mitigating})
& 0.0864 ($\pm$ 0.0105) & 61.36 ($\pm$ 1.99) 
& 0.0903 ($\pm$ 0.0117) & 60.22 ($\pm$ 1.53) 
& 0.0896 ($\pm$ 0.0156) & 70.46 ($\pm$ 1.59) 
& 0.0888 ($\pm$ 0.0126) & 64.01 ($\pm$ 1.70)  \\
AGLA (\citeyear{an2025mitigating})
& \uline{0.0775 ($\pm$ 0.0198)} & \uline{75.77 ($\pm$ 1.28)} 
& \uline{0.0899 ($\pm$ 0.0155)} & \uline{73.99 ($\pm$ 1.11)} 
& \uline{0.0826 ($\pm$ 0.0211)} & \uline{81.36 ($\pm$ 1.78)} 
& \uline{0.0833 ($\pm$ 0.0188)} & \uline{77.04 ($\pm$ 1.39)}  \\
\rowcolor{blue!8}
\method (Ours)
& \textbf{0.0694 ($\pm$ 0.0155)} & \textbf{88.33 ($\pm$ 1.45)} 
& \textbf{0.0839 ($\pm$ 0.0111)} & \textbf{83.63 ($\pm$ 1.55)} 
& \textbf{0.0733 ($\pm$ 0.0122)} & \textbf{87.59 ($\pm$ 1.33)} 
& \textbf{0.0755 ($\pm$ 0.0129)} & \textbf{86.52 ($\pm$ 1.44)}  \\
\bottomrule
\end{tabular}
}
\end{table*}

\textit{MME} evaluates the perception and cognition abilities of LVLMs~\citep{fu2025mme}, including object existence, attributes, spatial position, and relations. 

\textcolor{black}{
\textit{MMBench} evaluates the fine-grained multimodal understanding capabilities of large vision-language models through multiple-choice questions spanning 20 ability dimensions~\citep{liu2024mmbench}.
}

\noindent\textbf{Implementation Details.} Throughout all experiments, we followed the recommended settings from the respective papers and used the released codes to ensure fair comparisons. 
All evaluations were repeated over 3000 randomized trials (details in Appendix~\ref{app:implementation}).
For FDR control, we set the target level to $q = 0.1$, meaning that the expected proportion of hallucinated objects among all selected objects is controlled to be no greater than $10\%$.

\subsection{Experimental Results} \label{main:exp}
\noindent\textbf{Results on FDR \& Power.} 
The FDR and power results in Table~\ref{tab:mlt_coco} demonstrate that $\method$ achieves effective and consistent control of object hallucination under the target FDR level of $10\%$. Across most settings, $\method$ attains the lowest or near-lowest false discovery rate, indicating strong control over hallucinated objects compared to existing baselines. Importantly, this improved FDR control does not sacrifice power; $\method$ consistently achieves the highest or near-highest power, reflecting its ability to retain truly visually grounded objects while mitigating hallucinations. This favorable balance between multiple-testing error control (via FDR) and power is maintained on the more challenging \textit{popular} and \textit{adversarial} subsets, where competing methods exhibit noticeable degradation in power. These results demonstrate that $\method$ enables effective FDR control while maintaining high power, leading to robust hallucination mitigation.

\begin{table*}[ht]
\centering
\caption{Evaluation with POPE score across multiple LVLM architectures on the MSCOCO dataset.
We report individualized Accuracy and F1 score (mean $\pm$ std over 3000 runs).
\textbf{Bold} indicates the best result and \uline{underline} indicates the second-best result.
}
\label{tab:pope_coco}
\setlength{\tabcolsep}{3pt}
\renewcommand{\arraystretch}{1.15}
\small
\resizebox{\textwidth}{!}{%
\begin{tabular}{lcccccccc}
\toprule
\multirow{3}{*}{Method}
& \multicolumn{2}{c}{LLaVA-OneVision-7B}
& \multicolumn{2}{c}{Qwen2.5-VL-7B}
& \multicolumn{2}{c}{InternVL3-8B}
& \multicolumn{2}{c}{\textcolor{red}{Average}} \\
\cmidrule(lr){2-3}\cmidrule(lr){4-5}\cmidrule(lr){6-7}\cmidrule(lr){8-9}
& Accuracy$\uparrow$ & F1 Score$\uparrow$
& Accuracy$\uparrow$ & F1 Score$\uparrow$
& Accuracy$\uparrow$ & F1 Score$\uparrow$
& Accuracy$\uparrow$ & F1 Score$\uparrow$ \\
\midrule

\multicolumn{9}{l}{\textit{Random}}\\
\midrule
Regular
& 85.87 ($\pm$ 1.77) & 85.72 ($\pm$ 1.66)
& 86.20 ($\pm$ 1.30) & 86.98 ($\pm$ 1.28)
& 87.15 ($\pm$ 1.88) & 86.26 ($\pm$ 1.80)
& 86.41 ($\pm$ 1.65) & 86.32 ($\pm$ 1.58) \\
VCD (\citeyear{leng2024mitigating})
& 86.33 ($\pm$ 1.23) & 88.86 ($\pm$ 1.43)
& 87.35 ($\pm$ 1.83) & 90.06 ($\pm$ 2.03)
& 88.15 ($\pm$ 1.53) & 88.05 ($\pm$ 1.45)
& 87.28 ($\pm$ 1.53) & 88.99 ($\pm$ 1.64) \\
MARINE (\citeyear{zhao2024mitigating})
& 87.21 ($\pm$ 1.35) & 89.09 ($\pm$ 1.09)
& 89.72 ($\pm$ 1.11) & 90.33 ($\pm$ 2.09)
& 89.23 ($\pm$ 1.35) & 89.15 ($\pm$ 1.19)
& 88.72 ($\pm$ 1.27) & 89.52 ($\pm$ 1.46) \\
AGLA (\citeyear{an2025mitigating})
& \uline{88.15 ($\pm$ 1.65)} & \uline{89.97 ($\pm$ 1.21)}
& \uline{90.01 ($\pm$ 1.23)} & \uline{91.17 ($\pm$ 1.51)}
& \uline{92.53 ($\pm$ 1.42)} & \uline{93.54 ($\pm$ 1.77)}
& \uline{90.23 ($\pm$ 1.43)} & \uline{91.56 ($\pm$ 1.50)} \\
\rowcolor{blue!8}
$\method$ (Ours)
& \textbf{92.17 ($\pm$ 0.89)} & \textbf{91.64 ($\pm$ 1.83)}
& \textbf{90.63 ($\pm$ 1.54)} & \textbf{91.89 ($\pm$ 1.83)}
& \textbf{93.55 ($\pm$ 1.52)} & \textbf{95.35 ($\pm$ 1.87)}
& \textbf{92.12 ($\pm$ 1.32)} & \textbf{92.96 ($\pm$ 1.84)} \\
\midrule

\multicolumn{9}{l}{\textit{Popular}}\\
\midrule
Regular
& 82.72 ($\pm$ 1.99) & 83.16 ($\pm$ 1.85)
& 83.88 ($\pm$ 1.48) & 84.79 ($\pm$ 1.52)
& 84.63 ($\pm$ 1.48) & 85.26 ($\pm$ 1.15)
& 83.74 ($\pm$ 1.65) & 84.40 ($\pm$ 1.51) \\
VCD (\citeyear{leng2024mitigating})
& 84.24 ($\pm$ 1.46) & 86.16 ($\pm$ 1.30)
& 86.32 ($\pm$ 1.83) & 89.34 ($\pm$ 2.04)
& 85.29 ($\pm$ 1.14) & 86.25 ($\pm$ 2.00)
& 85.28 ($\pm$ 1.48) & 87.25 ($\pm$ 1.78) \\
MARINE (\citeyear{zhao2024mitigating})
& 86.81 ($\pm$ 1.21) & 88.19 ($\pm$ 2.09)
& \uline{88.72 ($\pm$ 1.38)} & 90.33 ($\pm$ 1.18)
& 87.67 ($\pm$ 1.53) & 88.37 ($\pm$ 1.81)
& 87.73 ($\pm$ 1.37) & 88.96 ($\pm$ 1.69) \\
AGLA (\citeyear{an2025mitigating})
& \uline{89.28 ($\pm$ 1.11)} & \uline{89.34 ($\pm$ 1.70)}
& 88.14 ($\pm$ 1.23) & \uline{90.79 ($\pm$ 1.15)}
& \uline{89.42 ($\pm$ 1.32)} & \uline{90.17 ($\pm$ 1.22)}
& \uline{88.95 ($\pm$ 1.22)} & \uline{90.10 ($\pm$ 1.36)} \\
\rowcolor{blue!8}
$\method$ (Ours)
& \textbf{90.37 ($\pm$ 1.89)} & \textbf{89.91 ($\pm$ 1.33)}
& \textbf{90.00 ($\pm$ 1.45)} & \textbf{91.28 ($\pm$ 1.83)}
& \textbf{91.33 ($\pm$ 1.35)} & \textbf{93.57 ($\pm$ 1.44)}
& \textbf{90.57 ($\pm$ 1.56)} & \textbf{91.59 ($\pm$ 1.53)} \\
\midrule

\multicolumn{9}{l}{\textit{Adversarial}}\\
\midrule
Regular
& 80.28 ($\pm$ 2.21) & 81.02 ($\pm$ 2.07)
& 83.17 ($\pm$ 1.37) & 83.46 ($\pm$ 2.08)
& 84.59 ($\pm$ 1.33) & 84.96 ($\pm$ 1.80)
& 82.68 ($\pm$ 1.64) & 83.15 ($\pm$ 1.98) \\
VCD (\citeyear{leng2024mitigating})
& 81.21 ($\pm$ 1.89) & 82.96 ($\pm$ 1.82)
& 85.43 ($\pm$ 1.88) & 85.24 ($\pm$ 2.04)
& 85.88 ($\pm$ 1.65) & 86.99 ($\pm$ 1.77)
& 84.17 ($\pm$ 1.81) & 85.06 ($\pm$ 1.88) \\
MARINE (\citeyear{zhao2024mitigating})
& 85.26 ($\pm$ 2.11) & 85.29 ($\pm$ 2.09)
& 87.42 ($\pm$ 1.11) & 88.73 ($\pm$ 1.91)
& 87.25 ($\pm$ 1.75) & 88.11 ($\pm$ 1.16)
& 86.64 ($\pm$ 1.66) & 87.38 ($\pm$ 1.72) \\
AGLA (\citeyear{an2025mitigating})
& \uline{86.44 ($\pm$ 1.61)} & \textbf{88.14 ($\pm$ 2.01)}
& \uline{91.39 ($\pm$ 1.23)} & \uline{89.73 ($\pm$ 2.15)}
& \uline{90.11 ($\pm$ 1.32)} & \uline{89.83 ($\pm$ 1.10)}
& \uline{89.31 ($\pm$ 1.39)} & \uline{89.23 ($\pm$ 1.75)} \\
\rowcolor{blue!8}
$\method$ (Ours)
& \textbf{88.27 ($\pm$ 1.81)} & \uline{87.99 ($\pm$ 2.03)}
& \textbf{93.75 ($\pm$ 1.54)} & \textbf{90.87 ($\pm$ 1.83)}
& \textbf{91.03 ($\pm$ 1.00)} & \textbf{90.48 ($\pm$ 1.83)}
& \textbf{91.02 ($\pm$ 1.45)} & \textbf{89.78 ($\pm$ 1.90)} \\
\bottomrule
\end{tabular}
}
\end{table*}

\noindent\textbf{Results on POPE.} POPE evaluates object-level grounding in LVLMs by testing their ability to answer \textit{yes-or-no} questions about visual content. On the MSCOCO dataset, we report accuracy and F1 score. As shown in Table~\ref{tab:pope_coco}, $\method$ consistently improves performance across all evaluated LVLMs and settings, demonstrating its effectiveness in mitigating object hallucinations. Notably, the improvements are particularly evident under the more challenging \textit{adversarial} setting, where hallucination errors are more likely to occur. Additional results are provided in Tables~\ref{tab:pope_coco1_random}, \ref{tab:pope_vaq}, and \ref{tab:pope_gqa} in the Appendix. By explicitly controlling the false discovery rate (FDR), our method prioritizes limiting false positive predictions within each image. While this emphasis may introduce a mild trade-off with recall, it leads to more reliable predictions and improved overall performance in hallucination mitigation.

\begin{table*}[h]
\centering
\caption{Evaluation with MME score (mean $\pm$ std over 3000 runs) across multiple LVLM architectures on MSCOCO. 
\textbf{Bold} indicates the best result and \uline{underline} indicates the second-best result.}
\label{tab:mme_new}
\setlength{\tabcolsep}{4pt}
\renewcommand{\arraystretch}{1.15}
\resizebox{\textwidth}{!}{%
\begin{tabular}{llcccccc}
\toprule
\multirow{2}{*}{Model} & \multirow{2}{*}{Method}
& \multicolumn{2}{c}{Object}
& \multicolumn{1}{c}{Attribute}
& \multicolumn{2}{c}{Relation}
& \multirow{2}{*}{Total $\uparrow$} \\
\cmidrule(lr){3-4} \cmidrule(lr){5-5} \cmidrule(lr){6-7}
& 
& Existence $\uparrow$
& Count $\uparrow$
& Color $\uparrow$
& Position $\uparrow$
& Commonsense $\uparrow$
& \\
\midrule

\multirow{5}{*}{LLaVA-OneVision-7B}
& Regular
& 190.33 ($\pm$ 6.50) & 145.53 ($\pm$ 15.20) & 170.66 ($\pm$ 9.10) & 160.25 ($\pm$ 8.30) & 70.16 ($\pm$ 6.50) & 736.93 ($\pm$ 24.80) \\
& VCD (\citeyear{leng2024mitigating})
& 186.25 ($\pm$ 7.22) & 147.25 ($\pm$ 11.44) & 175.35 ($\pm$ 15.58) & 165.36 ($\pm$ 2.55) & 79.42 ($\pm$ 5.74) & 753.63 ($\pm$ 18.76) \\
& MARINE (\citeyear{zhao2024mitigating})
& \uline{191.26 ($\pm$ 4.55)} & 150.44 ($\pm$ 10.15) & 177.45 ($\pm$ 10.96) & 165.36 ($\pm$ 7.19) & 82.33 ($\pm$ 5.29) & 766.84 ($\pm$ 17.47) \\
& AGLA (\citeyear{an2025mitigating})
& 190.54 ($\pm$ 6.22) & \uline{153.32 ($\pm$ 15.37)} & \uline{178.42 ($\pm$ 3.22)} & \uline{170.35 ($\pm$ 4.22)} & \uline{84.36 ($\pm$ 5.21)} & \uline{776.99 ($\pm$ 15.96)} \\
\rowcolor{blue!8}
& $\method$ (Ours)
& \textbf{193.67 ($\pm$ 3.21)} & \textbf{160.43 ($\pm$ 10.11)} & \textbf{180.35 ($\pm$ 8.46)} & \textbf{175.24 ($\pm$ 10.58)} & \textbf{85.52 ($\pm$ 4.22)} & \textbf{795.21 ($\pm$ 18.12)} \\

\midrule

\multirow{5}{*}{Qwen2.5-VL-7B}
& Regular
& 185.52 ($\pm$ 5.80) & 140.46 ($\pm$ 12.20) & 165.16 ($\pm$ 8.60) & 155.22 ($\pm$ 7.50) & 80.53 ($\pm$ 6.20) & 726.89 ($\pm$ 21.30) \\
& VCD (\citeyear{leng2024mitigating})
& 187.35 ($\pm$ 7.36) & 145.63 ($\pm$ 10.45) & 173.35 ($\pm$ 8.23) & 160.22 ($\pm$ 11.34) & 82.55 ($\pm$ 6.29) & 749.10 ($\pm$ 11.61) \\
& MARINE (\citeyear{zhao2024mitigating})
& 189.83 ($\pm$ 4.25) & 153.35 ($\pm$ 10.24) & 176.25 ($\pm$ 7.21) & 165.14 ($\pm$ 10.44) & 84.21 ($\pm$ 4.87) & 768.78 ($\pm$ 11.88) \\
& AGLA (\citeyear{an2025mitigating})
& \uline{190.15 ($\pm$ 3.28)} & \uline{158.33 ($\pm$ 11.38)} & \uline{179.35 ($\pm$ 4.15)} & \uline{168.13 ($\pm$ 9.83)} & \uline{87.28 ($\pm$ 2.28)} & \uline{783.24 ($\pm$ 13.42)} \\
\rowcolor{blue!8}
& $\method$ (Ours)
& \textbf{194.24 ($\pm$ 3.18)} & \textbf{161.35 ($\pm$ 10.01)} & \textbf{182.53 ($\pm$ 6.98)} & \textbf{170.86 ($\pm$ 8.14)} & \textbf{88.91 ($\pm$ 3.71)} & \textbf{797.89 ($\pm$ 12.67)} \\

\midrule

\multirow{5}{*}{InternVL3-8B}
& Regular
& 192.25 ($\pm$ 6.20) & 150.22 ($\pm$ 10.80) & 175.48 ($\pm$ 7.40) & 165.16 ($\pm$ 6.50) & 95.38 ($\pm$ 5.20) & 778.49 ($\pm$ 19.80) \\
& VCD (\citeyear{leng2024mitigating})
& 193.26 ($\pm$ 3.28) & 159.25 ($\pm$ 10.97) & 178.25 ($\pm$ 9.25) & 175.35 ($\pm$ 7.73) & 95.15 ($\pm$ 1.54) & 801.26 ($\pm$ 13.36) \\
& MARINE (\citeyear{zhao2024mitigating})
& \textbf{193.87 ($\pm$ 2.82)} & 163.28 ($\pm$ 3.29) & 180.93 ($\pm$ 5.18) & 177.35 ($\pm$ 4.26) & \uline{96.18 ($\pm$ 1.58)} & 811.61 ($\pm$ 13.42) \\
& AGLA (\citeyear{an2025mitigating})
& 193.17 ($\pm$ 3.28) & \uline{165.26 ($\pm$ 7.77)} & \uline{183.85 ($\pm$ 5.28)} & \uline{177.96 ($\pm$ 4.26)} & 95.15 ($\pm$ 1.77) & \uline{815.39 ($\pm$ 12.81)} \\
\rowcolor{blue!8}
& $\method$ (Ours)
& \uline{193.46 ($\pm$ 4.26)} & \textbf{168.36 ($\pm$ 10.53)} & \textbf{185.29 ($\pm$ 13.19)} & \textbf{178.54 ($\pm$ 1.35)} & \textbf{96.28 ($\pm$ 1.98)} & \textbf{821.93 ($\pm$ 12.30)} \\
\bottomrule
\end{tabular}
}
\end{table*}

\noindent\textbf{Results on MME.} The MME evaluation extends beyond POPE by covering a broader range of hallucination types, including object-, attribute-, and relation-level hallucinations. As shown in Table~\ref{tab:mme_new}, $\method$ consistently improves overall MME performance across all evaluated LVLM architectures, achieving the best total scores in all settings. These results suggest that, although $\method$ is primarily designed for object-level hallucination control, it can also help mitigate attribute-level inconsistencies by suppressing noise-induced predictions that are not well supported by the visual input. This indicates that the benefits of $\method$ extend beyond object presence to more fine-grained visual descriptions. Additional MME results for other models are reported in Table~\ref{tab:mme} in Appendix.

\noindent\textbf{Results on MMBench.}
We further evaluate $\method$ on MMBench~\citep{liu2024mmbench} to assess its effect on general multimodal understanding beyond object-hallucination benchmarks. As shown in Table~\ref{tab:mmbench}, $\method$ improves the MMBench score across all three evaluated backbones. For LLaVA-OneVision-7B, the score increases from 80.8 to 86.7, yielding a gain of 5.9 points. For Qwen2.5-VL-7B and InternVL3-8B, the scores improve from 83.5 to 87.9 and from 83.4 to 94.5, respectively, corresponding to gains of 4.4 and 11.1 points. These results suggest that $\method$ mitigates object hallucination while improving broader multimodal understanding under the evaluated settings.

\begin{table*}[t]
\centering
\caption{Evaluation with MMBench score (mean $\pm$ std over 3000 runs) across multiple LVLM architectures on MSCOCO. 
\textbf{Bold} indicates the best result and \uline{underline} indicates the second-best result.}
\label{tab:mmbench}
\setlength{\tabcolsep}{10pt}
\renewcommand{\arraystretch}{1.15}
\begin{tabular}{lccc}
\toprule
Method
& LLaVA-OneVision-7B
& Qwen2.5-VL-7B
& InternVL3-8B \\
\midrule
Regular
& 80.77 ($\pm$ 1.21) & 83.50 ($\pm$ 1.01) & 83.40 ($\pm$ 0.85)\\
VCD (\citeyear{leng2024mitigating})
& 82.67 ($\pm$ 1.00) & 85.14 ($\pm$ 1.32) & 86.84 ($\pm$ 0.99)\\
MARINE (\citeyear{zhao2024mitigating})
& 85.31 ($\pm$ 1.09) & \uline{87.44 ($\pm$ 1.17)} & 88.67 ($\pm$ 1.03)\\
AGLA (\citeyear{an2025mitigating})
& \uline{85.76 ($\pm$ 1.11)} & 87.02 ($\pm$ 1.07)& \uline{90.15 ($\pm$ 1.00)}\\
\rowcolor{blue!8}
$\method$ (Ours)
& \textbf{86.70 ($\pm$ 1.13)} & \textbf{87.90 ($\pm$ 0.91)} & \textbf{94.50 ($\pm$ 0.80)} \\
\bottomrule
\end{tabular}
\end{table*}

\begin{table*}[htbp]
\centering
\caption{Ablation study on POPE metrics using the MSCOCO dataset with LLaVA-OneVision-7B.
Results are mean $\pm$ std over 10 runs.
\textbf{Bold} indicates the best result.}
\label{tab:ablation_pope}

\small
\begin{tabular}{lcccc}
\midrule
Method
& Accuracy $\uparrow$
& Precision $\uparrow$
& Recall $\uparrow$
& F1 Score $\uparrow$ \\
\midrule

Regular
& 85.87 ($\pm$ 1.77)
& 83.41 ($\pm$ 2.13)
& 88.08 ($\pm$ 1.47)
& 85.72 ($\pm$ 1.66) \\

w/o Visual Uncertainty Splitting
& 88.12 ($\pm$ 0.62)
& 86.78 ($\pm$ 0.74)
& 89.07 ($\pm$ 0.71)
& 87.90 ($\pm$ 0.58) \\

w/o Mirror Statistic
& 88.76 ($\pm$ 0.55)
& 87.07 ($\pm$ 0.49)
& 90.33 ($\pm$ 0.61)
& 88.67 ($\pm$ 0.53) \\

w/o Overall FDR Control
& 89.36 ($\pm$ 0.48)
& 86.42 ($\pm$ 0.92)
& 92.37 ($\pm$ 0.56)
& 89.27 ($\pm$ 0.41) \\

\midrule
\rowcolor{blue!8}
$\method$ (Ours)
& \textbf{92.17 ($\pm$ 0.89)}
& \textbf{88.25 ($\pm$ 1.46)}
& \textbf{94.87 ($\pm$ 1.61)}
& \textbf{91.64 ($\pm$ 1.83)} \\
\midrule

\end{tabular}
\end{table*}

\begin{wrapfigure}{r}{0.55\linewidth}
    \centering
    \vspace{-0.18in}
    \includegraphics[width=\linewidth]{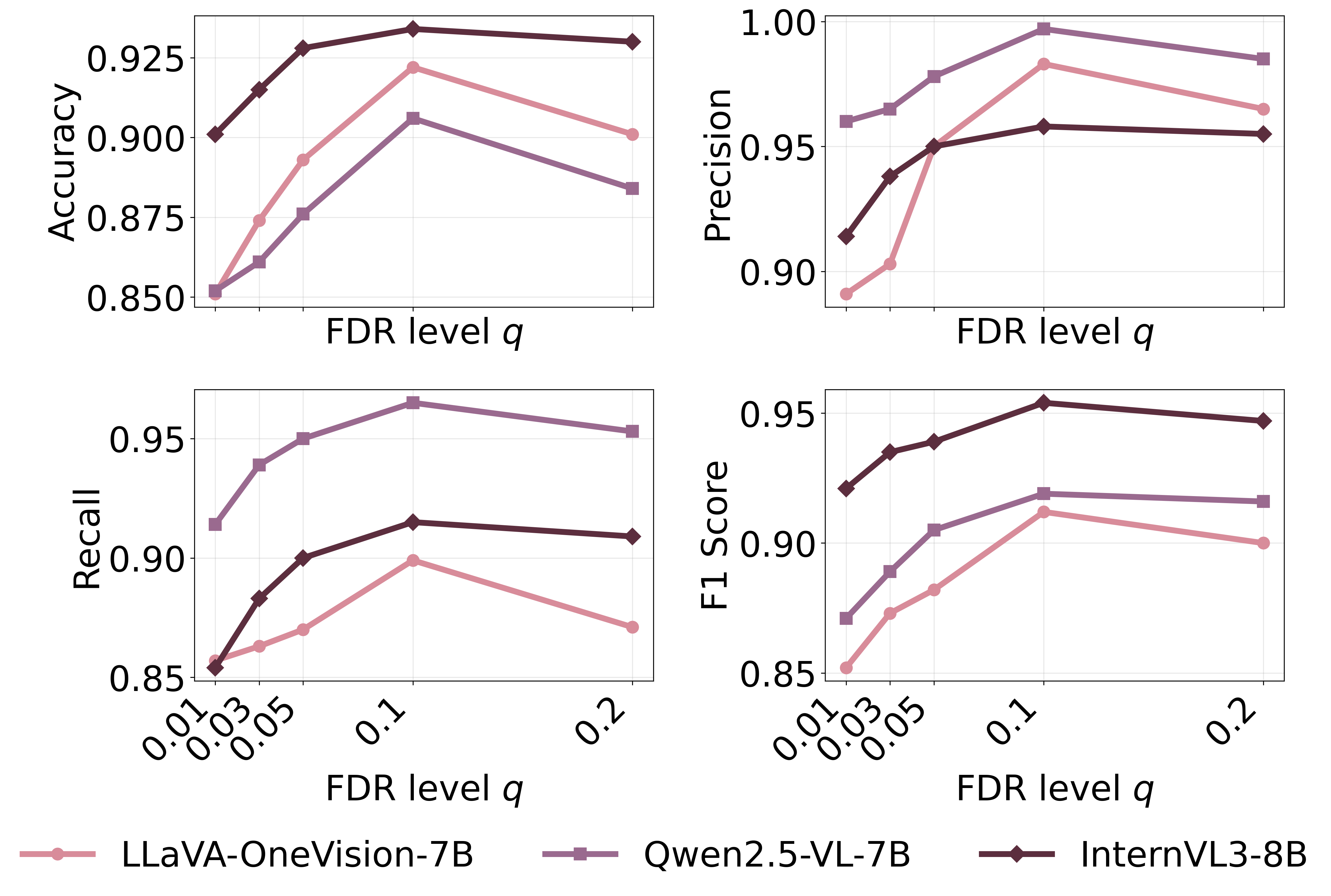}
    \caption{Ablation study on the effect of FDR target level ($q$) on the performance of LLaVA-OneVision-7B, Qwen2.5-VL-7B, InternVL3-8B using POPE metrics with $q = \{0.01, 0.03, 0.05, 0.1, 0.2\}$.}
    \label{fig:q_level_new}
    \vspace{-0.22in}
\end{wrapfigure}

\subsection{Ablation Study}
We conduct ablation studies to evaluate the contribution of visual uncertainty splitting, mirror statistic, and FDR control. As shown in Table~\ref{tab:ablation_pope}, removing any component degrades performance, confirming their complementary roles. In particular, disabling FDR control increases recall but reduces precision and F1, indicating a higher tendency to retain hallucinated objects. Removing mirror statistic leads to reduced recall, reflecting weaker ability to preserve truly grounded objects, while removing visual uncertainty splitting results in overall performance degradation. The full $\method$ achieves the best results across all metrics, demonstrating its effectiveness in balancing FDR control and power. Additional results are provided in Figures~\ref{fig:ablation_fdr_tau} and~\ref{fig:ablation_pope_tau} in the Appendix.


\noindent\textbf{Effect of FDR Control at Different Levels on Object Hallucinations.} Figure~\ref{fig:q_level_new} shows the impact of the FDR target level $q$ on performance across LVLMs. As $q$ increases from small values, both accuracy and F1 score improve, indicating that overly strict control suppresses not only hallucinated objects but also true positives. Performance peaks at intermediate values of $q$, where a favorable balance between hallucination mitigation and retention of grounded objects is achieved. Further increasing $q$ leads to diminishing returns or slight degradation, as more spurious predictions are admitted. These results highlight the fundamental FDR-power trade-off, with consistent trends observed across architectures. This behavior aligns with the theoretical role of FDR control in regulating false discoveries while preserving statistical power.



\noindent\textbf{Latency Analysis.}
We evaluate inference efficiency by measuring the average latency per generated token. Detailed results are provided in Figure~\ref{fig:latency} in the Appendix~\ref{app:latency}. Among all approaches, $\method$ incurs the smallest latency increase. 
Among all approaches, $\method$ incurs the smallest latency increase (50.26 ms/token). While AGLA~\citep{an2025mitigating} (51.42 ms/token) and MARINE~\citep{zhao2024mitigating} (52.21 ms/token) rely on iterative decoding or repeated sampling, and VCD~\citep{leng2024mitigating} (53.42 ms/token) performs contrastive decoding within the autoregressive loop, these methods introduce step-wise overhead that accumulates over the sequence. Although $\method$ evaluates three visual inputs, these forward passes are used only for post-hoc mirror statistic computation. As a result, $\method$ avoids decoding-time logit reweighting and achieves lower latency than VCD despite the additional view.

\begin{wrapfigure}{l}{0.55\linewidth}
    \centering
    \vspace{-0.19in}
    \includegraphics[width=\linewidth]{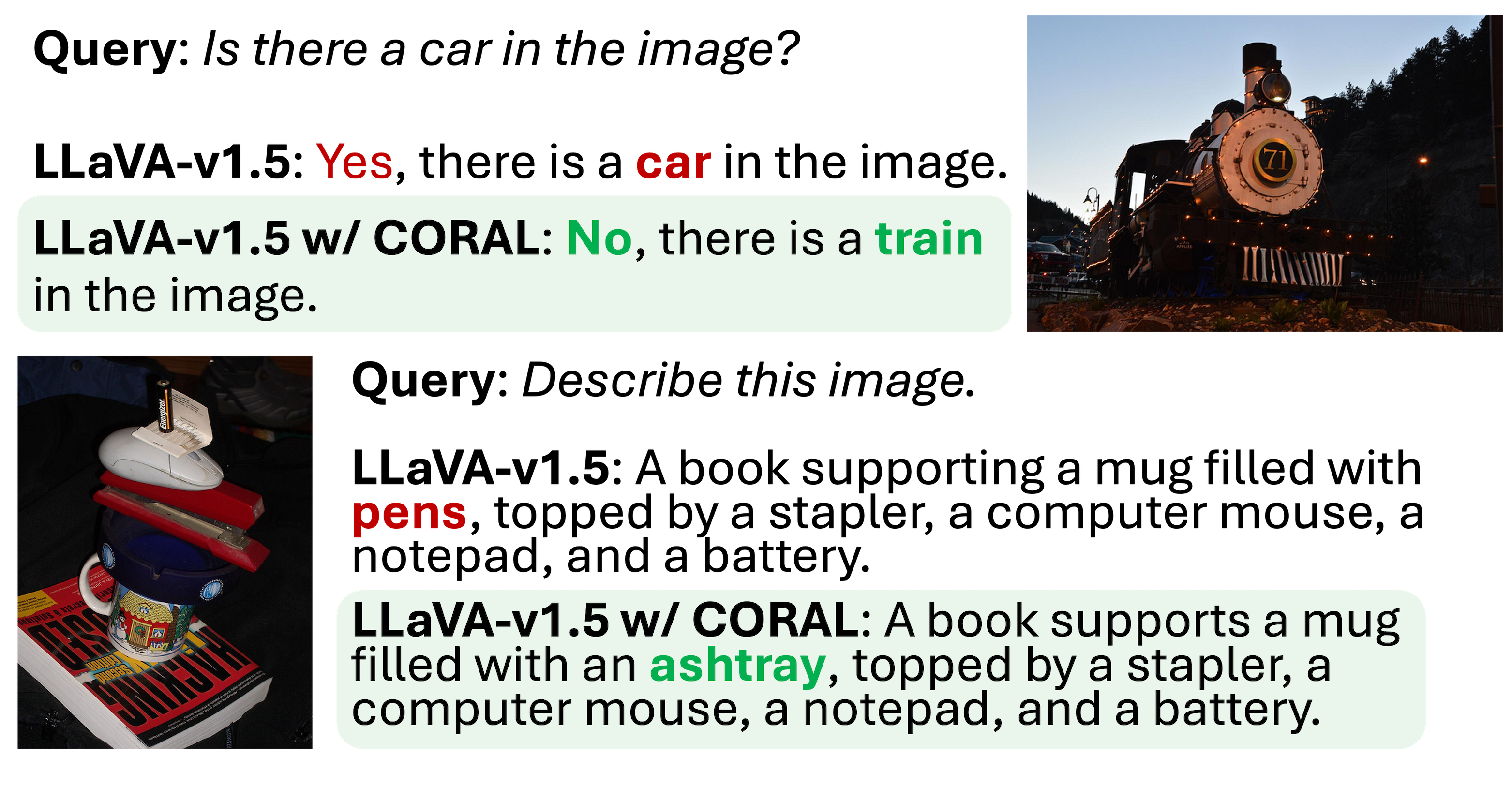}
    \caption{Hallucination mitigation examples by our method \method~ across multiple tasks. Hallucinated objects are highlighted in \textcolor{red}{red}.}
    \label{fig:example}
     \vspace{-0.15in}
\end{wrapfigure}
\section{Conclusion, Limitations, and Future Work} \label{main:conclusion}
We propose $\method$, a training-free framework that introduces visual uncertainty via data splitting and leverages mirror statistic to control the FDR of hallucinated objects during decoding. By explicitly controlling false positives at the image level while maintaining
high power, $\method$ suppresses hallucinations without discarding informative visual evidence. Extensive experiments across multiple LVLM architectures demonstrate consistent reductions in FDR and improvements in power and accuracy, particularly under challenging \textit{popular} and \textit{adversarial} settings.

\noindent\textbf{Limitations and Future Work.}  
Although effective, $\method$ cannot be applied directly to black-box commercial APIs that do not expose token-level logits. Future work includes extending the FDR-based framework to handle more complex hallucinations, such as relational and attribute errors, and to open-ended generation tasks beyond object existence queries. More broadly, integrating $\method$ with adaptive decoding strategies and applying FDR control to multimodal reasoning tasks are promising directions. 
\newpage
\bibliographystyle{unsrtnat}
\bibliography{CORAL}






\newpage
\appendix
\section{Impact Statement} \label{app:impact}
This paper presents research aimed at improving the reliability of large vision-language models by reducing hallucinated outputs during generation. The proposed method, $\method$, operates directly at decoding time and does not require additional training, external data, or model fine-tuning. This makes it easy to apply to existing models and practical for real-world use. By improving the alignment between generated outputs and visual input, $\method$ helps reduce visually ungrounded content while preserving useful and relevant information. This targeted reduction of hallucinations can improve the quality and dependability of model responses, especially in applications where incorrect outputs may mislead users. At the same time, $\method$ focuses specifically on hallucinations related to visual grounding and does not address other issues such as harmful language or biases inherited from model pretraining. These limitations are outside the scope of this work and may be explored in future research. To the best of our knowledge, this work does not introduce negative societal impacts associated with our research that merit highlighting in these discussions. 

\section{Experiment Setup} \label{app:exp}
We conduct all of the experiments using a cluster with four NVIDIA Quadro RTX 6000 (24GB) GPUs using CUDA 11.7. Each single experiment can be run on a single RTX 6000 GPU. 
\subsection{Model Architecture}
In Table \ref{tab:lvml_arch}, we provide detailed descriptions of the LVLM architectures used in our experiments. These LVLMs respectively leverage the pre-trained vision encoder of the models we listed, which are all based on the Vision Transformer (ViT)~\citep{dosovitskiy2020image} architecture. 

\begin{table}[h]
\centering
\caption{Details of the LVLM architectures used in our experiments.}
\label{tab:lvml_arch}
\begin{tabular}{l|c|c}
\hline
Model & Vision encoder & LLM \\
\hline
LLaVA-v1.5~\citep{liu2023visual}
& CLIP-L-336px~\citep{radford2021learning} 
& Vicuna-v1.5-7B~\citep{chiang2023vicuna} \\

Qwen-VL~\citep{bai2023Qwen} 
& ViT-based visual encoder 
& Qwen-7B~\citep{bai2023Qwen} \\

InstructBLIP~\citep{dai2023instructblip}
& BLIP-2~\citep{li2023blip}
& Vicuna-v1.1-7B~\citep{chiang2023vicuna} \\

LLaVA-OneVision-7B~\citep{li2024llava} & CLIP ViT-L/14 (336px)~\citep{radford2021learning} & Vicuna-7B~\citep{chiang2023vicuna} \\
Qwen2.5-VL-7B~\citep{bai2025qwen25vl} & ViT-based visual encoder & Qwen2.5-7B~\citep{bai2025qwen25vl} \\
InternVL3-8B~\citep{zhu2025internvl3} & InternViT (high-resolution ViT) & InternLM2-8B~\citep{zhu2025internvl3} \\
\hline
\end{tabular}
\end{table}

\subsection{Prompt Template}
For each query, we randomly select a prompt template from the available template list, as shown in Table~\ref{tab:prompt_templates_ours}.

\begin{table}[ht]
\centering
\caption{Details of the LVLM architectures that we used in our paper.}
\label{tab:prompt_templates_ours}
\begin{tabular}{p{0.28\linewidth} p{0.68\linewidth}}
\toprule
\textbf{Template Type} & \textbf{Prompt Template} \\
\midrule
POPE task
&
This image contains only the following objects: \texttt{<OBJECT\_GROUNDING>}.
Do not assume any objects beyond this list. Based solely on this information, \texttt{<QUERY>}

The detected objects in the image are: \texttt{<OBJECT\_GROUNDING>}.
Answer the question using only these objects. \texttt{<QUERY>}

This image shows the following objects: \texttt{<OBJECT\_GROUNDING>}. You must answer using only the objects in this list.Given these detected objects, \texttt{<QUERY>} 

The objects found in this image are limited to: \texttt{<OBJECT\_GROUNDING>}. You should rely strictly on this list of objects and make no other guesses. Based on this, \texttt{<QUERY>} \\
\midrule
$\method$ grounded
&
This image contains the following visually grounded objects: \texttt{<OBJECT\_GROUNDING>}.
Based on the image, \texttt{<QUERY>}

The following objects are visible in the image: \texttt{<OBJECT\_GROUNDING>}.
Using only the information from the image, \texttt{<QUERY>}

This image shows: \texttt{<OBJECT\_GROUNDING>}.
Please answer the following question based on the image content: \texttt{<QUERY>}
\\
\midrule
$\method$-restricted
&
The objects visible in this image are limited to: \texttt{<OBJECT\_GROUNDING>}.
Do not assume any objects beyond this list.
\texttt{<QUERY>}

Only the following objects appear in the image: \texttt{<OBJECT\_GROUNDING>}.
Answer the question using only visual evidence from the image.
\texttt{<QUERY>}

Based strictly on the objects shown in the image: \texttt{<OBJECT\_GROUNDING>}.
Do not infer any additional objects.
\texttt{<QUERY>}
\\
\midrule
$\method$-complementary
&
The same image is analyzed using multiple internally constructed visual representations, including the original view and two mirrored views. 

Original view detects the following objects: \texttt{<OBJECT\_GROUNDING>} 

Mirror view (+) detects the following objects: \texttt{<OBJECT\_GROUNDING\_A>} 

Mirror view (-) detects the following objects: \texttt{<OBJECT\_GROUNDING\_B>} 

Using the visual information above from the same image, \texttt{<QUERY>}

Multiple complementary visual representations are derived from the same image. 

Original view objects: \texttt{<OBJECT\_GROUNDING>} 

Mirrored view (+) objects: \texttt{<OBJECT\_GROUNDING\_A>} 

Mirrored view (-) objects: \texttt{<OBJECT\_GROUNDING\_B>} 

Based on the image, \texttt{<QUERY>}
\\
\bottomrule
\end{tabular}
\end{table}

\subsection{Implementation Details of Visual Perturbations}
In Eq.~\eqref{eq:ds}, the visual input $\mathbf{v}$ refers to the
patch-level visual embeddings produced by the vision encoder. Specifically, given a patch-level visual embeddings $\mathbf{v}$, we construct mirror views as $\mathbf{v}^{\pm} = \mathbf{v} \pm \tau_v \mathbf{Z}_v$,
where $\mathbf{Z}_v \sim \mathcal{N}(0, \mathbf{I})$ is sampled independently
per image. This design ensures that both mirror views share the same visual semantics while inducing controlled uncertainty in the visual signal.

\subsection{Empirical Validation of the Mirror-Symmetry Property}
\label{app:symmetry}

\textcolor{black}{
Our FDR estimator in Eq.~\eqref{eq:fdr_math} relies
on approximate null symmetry of the mirror statistic.
The paired visual views
$f_v^{\pm}=\mathbf{v}\pm\tau_v Z_v$
share the same noise realization with opposite signs.
Consequently, the induced logit contrasts
$\Delta_t^{+}$ and $\Delta_t^{-}$ are coupled
and are not necessarily uncorrelated.
Conditioning on $Z_v$ fixes the perturbations
and does not justify an uncorrelatedness claim. 
}

\textcolor{black}{
Instead, we state the required condition directly
as approximate conditional sign-flip invariance
under the null. Define
\[
H_{0t}:
\text{no systematic visual evidence supports token } y_t,
\]
and let
\[
\mathcal{G}_t
=
\sigma(\mathbf{v},\mathbf{x},\mathbf{y}_{<t})
\]
denote the shared context consisting of the image,
textual input, and decoding history.
}
\textcolor{black}{
\begin{lemma}[Conditional null symmetry]
\label{lemma1}
Suppose that, under $H_{0t}$,
\begin{equation}
\left.
(\Delta_t^{+},\Delta_t^{-})
\right|H_{0t},\mathcal{G}_t
\;\overset{d}{\approx}\;
\left.
(\Delta_t^{+},-\Delta_t^{-})
\right|H_{0t},\mathcal{G}_t,
\label{eq:app_sign_flip}
\end{equation}
where $\overset{d}{\approx}$ denotes approximate
equality in distribution.
Then the mirror statistic satisfies
\begin{equation}
\left.\Delta_t\right|H_{0t},\mathcal{G}_t
\;\overset{d}{\approx}\;
\left.-\Delta_t\right|H_{0t},\mathcal{G}_t.
\label{eq:app_null_symmetry}
\end{equation}
Consequently, for every $s>0$,
\begin{equation}
\Pr(\Delta_t\ge s\mid H_{0t},\mathcal{G}_t)
\approx
\Pr(\Delta_t\le -s\mid H_{0t},\mathcal{G}_t).
\label{eq:app_tail_symmetry}
\end{equation}
Under exact conditional sign-flip invariance,
these relations hold exactly.
\end{lemma}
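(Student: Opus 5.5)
The key observation is that the mirror statistic is an antisymmetric function of its second argument. Write $\Delta_t=M(\Delta_t^{+},\Delta_t^{-})$ with
\[
M(a,b)=|a+b|-|a-b| .
\]
Then
\[
M(a,-b)=|a-b|-|a+b|=-M(a,b)
\]
for all real $a,b$. This identity is purely algebraic and holds pointwise, so it does not depend on any uncorrelatedness or independence of $\Delta_t^{+}$ and $\Delta_t^{-}$. This is consistent with the remark preceding the lemma: the shared noise $Z_v$ couples the two contrasts, but this coupling plays no role.

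\textbf{Step 1 (exact case).} Assume exact conditional sign-flip invariance. Since $M$ is continuous and hence Borel measurable, equality in distribution is preserved under $M$, conditionally on $H_{0t}$ and $\mathcal{G}_t$. This gives
\[
\Delta_t=M(\Delta_t^{+},\Delta_t^{-})\overset{d}{=}M(\Delta_t^{+},-\Delta_t^{-})=-\Delta_t .
\]
Evaluating both sides on the event $\{\cdot\ge s\}$ yields
\[
\Pr(\Delta_t\ge s\mid H_{0t},\mathcal{G}_t)=\Pr(-\Delta_t\ge s\mid H_{0t},\mathcal{G}_t)=\Pr(\Delta_t\le -s\mid H_{0t},\mathcal{G}_t).
\]
This is Eq.~\eqref{eq:app_tail_symmetry} with equality, and it also proves Theorem~\ref{thm1}.

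\textbf{Step 2 (approximate case).} To make ``$\overset{d}{\approx}$'' precise, I would measure closeness in total variation. Let
\[
\varepsilon=d_{\mathrm{TV}}\bigl(\mathcal{L}(\Delta_t^{+},\Delta_t^{-}\mid\cdot),\,\mathcal{L}(\Delta_t^{+},-\Delta_t^{-}\mid\cdot)\bigr).
\]
Total variation does not increase under measurable maps. Applying $M$ therefore gives
\[
d_{\mathrm{TV}}(\Delta_t,-\Delta_t)\le\varepsilon,
\]
which implies
\[
\bigl|\Pr(\Delta_t\ge s\mid\cdot)-\Pr(\Delta_t\le -s\mid\cdot)\bigr|\le\varepsilon
\]
uniformly in $s>0$. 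So the approximate statements hold with an explicit error that vanishes in the exact case.

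\textbf{Main obstacle.} The mathematics is trivial; the difficulty is interpreting ``$\approx$''. Total variation is the cleanest choice because of the data-processing inequality. If instead one uses a weaker metric, such as Kolmogorov distance on the joint law, transferring closeness to the tail events $\{M\ge s\}$ requires more care. In that case I would fall back to a Lipschitz or bounded-density argument, or state the lemma directly in terms of the events $\{M(a,b)\ge s\}$.
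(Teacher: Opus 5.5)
Your proof is correct and follows essentially the same route as the paper. The paper likewise writes $\Delta_t=M(\Delta_t^{+},\Delta_t^{-})$ with $M(a,b)=|a+b|-|a-b|$, uses the pointwise antisymmetry $M(a,-b)=-M(a,b)$, and pushes the assumed sign-flip invariance through $M$ to get $\Delta_t\overset{d}{\approx}-\Delta_t$ and hence the tail relation. Your total-variation reading of $\overset{d}{\approx}$, with the data-processing bound $\bigl|\Pr(\Delta_t\ge s\mid\cdot)-\Pr(\Delta_t\le -s\mid\cdot)\bigr|\le\varepsilon$ uniformly in $s$, is a useful sharpening that the paper leaves informal.
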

}

\textcolor{black}{
\begin{proof}
Define
\[
M(a,b)=|a+b|-|a-b|.
\]
The mirror functional is antisymmetric under
a sign flip of its second argument:
\[
M(a,-b)
=
|a-b|-|a+b|
=
-M(a,b).
\]
Applying the assumed conditional sign-flip
invariance gives
\begin{align}
\left.\Delta_t\right|H_{0t},\mathcal{G}_t
&=
\left.M(\Delta_t^{+},\Delta_t^{-})
\right|H_{0t},\mathcal{G}_t
\notag\\
&\overset{d}{\approx}
\left.M(\Delta_t^{+},-\Delta_t^{-})
\right|H_{0t},\mathcal{G}_t
\notag\\
&=
\left.-\Delta_t\right|H_{0t},\mathcal{G}_t.
\end{align}
This yields the corresponding approximate positive- and negative-tail equality. Exact sign-flip invariance gives exact equalities.
\end{proof}
}
\textcolor{black}{
The conditional sign-flip condition is a working
assumption for nonlinear LVLM responses; it is
not guaranteed by Gaussian input perturbations
alone. It replaces the unsupported claim that
the two logit contrasts are uncorrelated by
construction. The mirror statistic and empirical
thresholding procedure remain unchanged.
}



\textcolor{black}{
\begin{proof}[\textbf{Proof of Theorem~\ref{thm1}}]
The conditional sign-flip assumption in Theorem~\ref{thm1} is the exact-invariance case of Lemma~\ref{lemma1}. Applying that lemma to the mirror statistic defined in Eq.~\eqref{eq:mirror} yields, for every $s>0$,
\begin{equation}
    \Pr(\Delta_t\le-s\mid H_{0t},\mathcal{G}_t)
=
\Pr(\Delta_t\ge s\mid H_{0t},\mathcal{G}_t).
\end{equation}
which completes the proof.
\end{proof}
}




\textcolor{black}{The null symmetry in Theorem~\ref{thm1}, together with suitable tail-count concentration conditions, motivates using negative-tail counts to estimate false discoveries among positively selected outputs.}

To empirically assess whether this assumption holds in practice for LVLMs, we conduct a dedicated diagnostic analysis on \emph{negative POPE queries}, where the queried object is guaranteed to be absent from the image. In this setting, all object-level statistic correspond to null hypotheses, providing a controlled environment for evaluating symmetry.

\paragraph{Interpretation of the mirror statistic.}
The mirror statistic admits the equivalent expression
\begin{equation}
\Delta_t
=
2\,\operatorname{sign}(\Delta_t^{+}\Delta_t^{-})
\min\{|\Delta_t^{+}|,|\Delta_t^{-}|\}.
\end{equation}
Thus, $\Delta_t$ is positive when the two contrasts
have the same nonzero sign, negative when they have
opposite signs, and zero when either contrast is zero.
Its magnitude is twice the smaller absolute contrast.
A large positive value therefore indicates a strong,
directionally consistent response across the paired
views, rather than establishing visual grounding
by itself.

In particular, both
$\Delta_t^{+}>0,\Delta_t^{-}>0$ and
$\Delta_t^{+}<0,\Delta_t^{-}<0$
produce positive mirror statistics.
In the latter regime, the token's logit is higher
under both perturbed views than under the clean input.
The current threshold-based selection rule does not
explicitly distinguish these two regimes.
Consequently, interpreting selected outputs as
visually grounded depends on the validity of the
null-symmetry assumption and the enrichment of
grounded outputs in the positive tail.

To empirically assess whether this assumption holds in practice for LVLMs, we conduct a dedicated diagnostic analysis on \emph{negative POPE queries}, where the queried object is guaranteed to be absent from the image. In this setting, all object-level statistic correspond to null hypotheses, providing a controlled environment for evaluating symmetry.

\paragraph{Setup.}
We use the POPE random split on LLaVA-v1.5 and extract object-level mirror statistic $\Delta_t$ from 1500 negative (answer = ``no'') queries.
Each $\Delta_t$ is computed from mirrored visual features as described in Sec.~\ref{mirror}, using the logit-margin difference between the positive and negative mirror views. No filtering or thresholding is applied in this analysis.

\paragraph{Distributional symmetry.}
Figure~\ref{fig:mirror_symmetry} visualizes the empirical distribution of $\Delta_t$ and its sign-flipped counterpart $-\Delta_t$. The overlaid histograms show strong overlap, and the QQ plot (in Figure~\ref{fig:mirror_qq}) of $\Delta_t$ versus $-\Delta_t$ aligns closely with the identity line, indicating approximate symmetry across the full range of quantiles.

\begin{figure}[ht]
    \centering
    \includegraphics[width=0.8\linewidth]{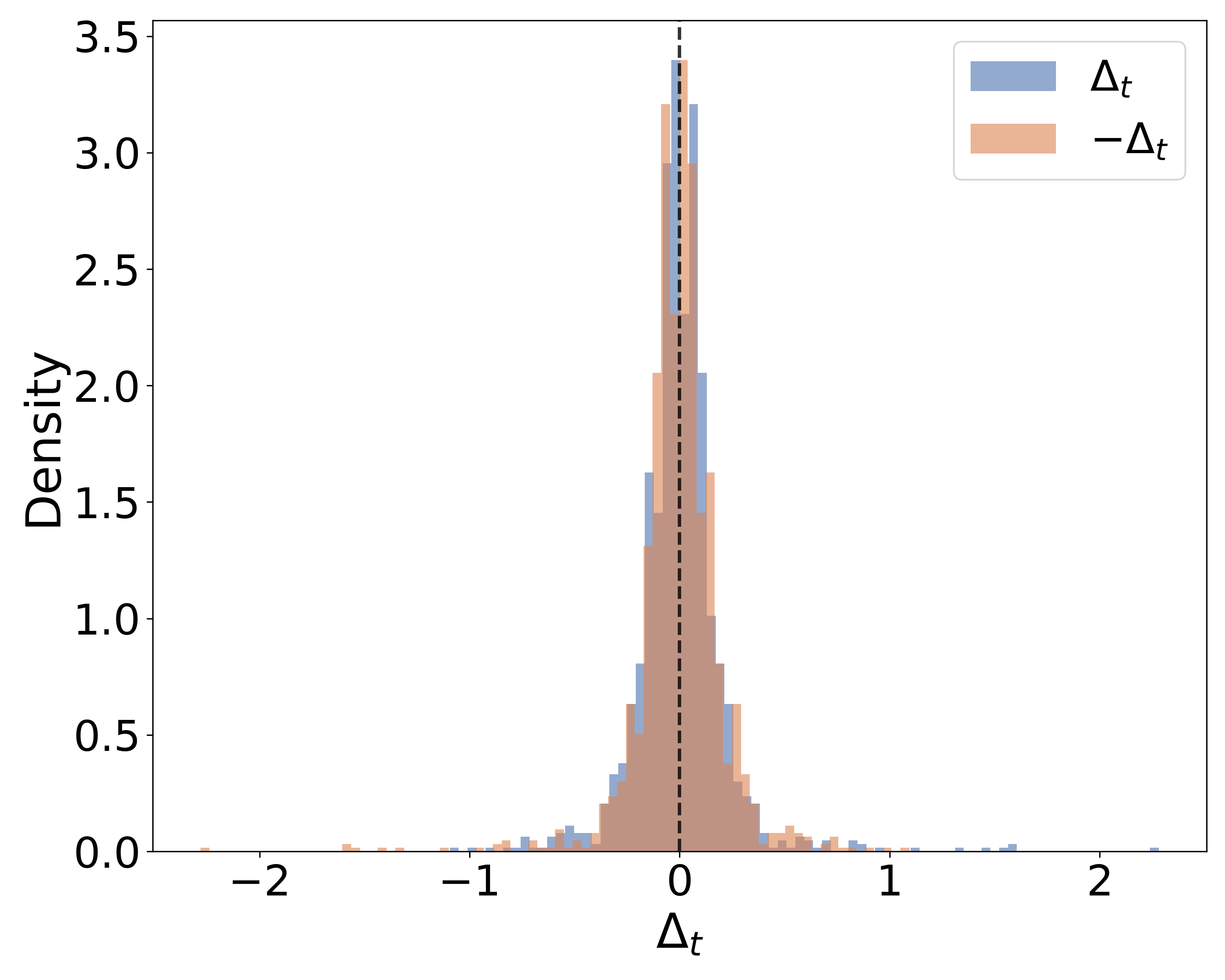}
    \caption{Empirical validation of mirror symmetry for mirror statistic $\Delta_t$. The figure shows overlaid histograms of $\Delta_t$ and its sign-flipped counterpart $-\Delta_t$ computed from negative POPE queries, where the queried object is guaranteed to be absent. The strong overlap between the two distributions and their near symmetry around zero indicate that mirror statistic for visually ungrounded objects are approximately symmetric, supporting their use as reference quantities for FDR estimation.}
    \label{fig:mirror_symmetry}
\end{figure}

\begin{figure}[ht]
    \centering
    \includegraphics[width=0.85\linewidth]{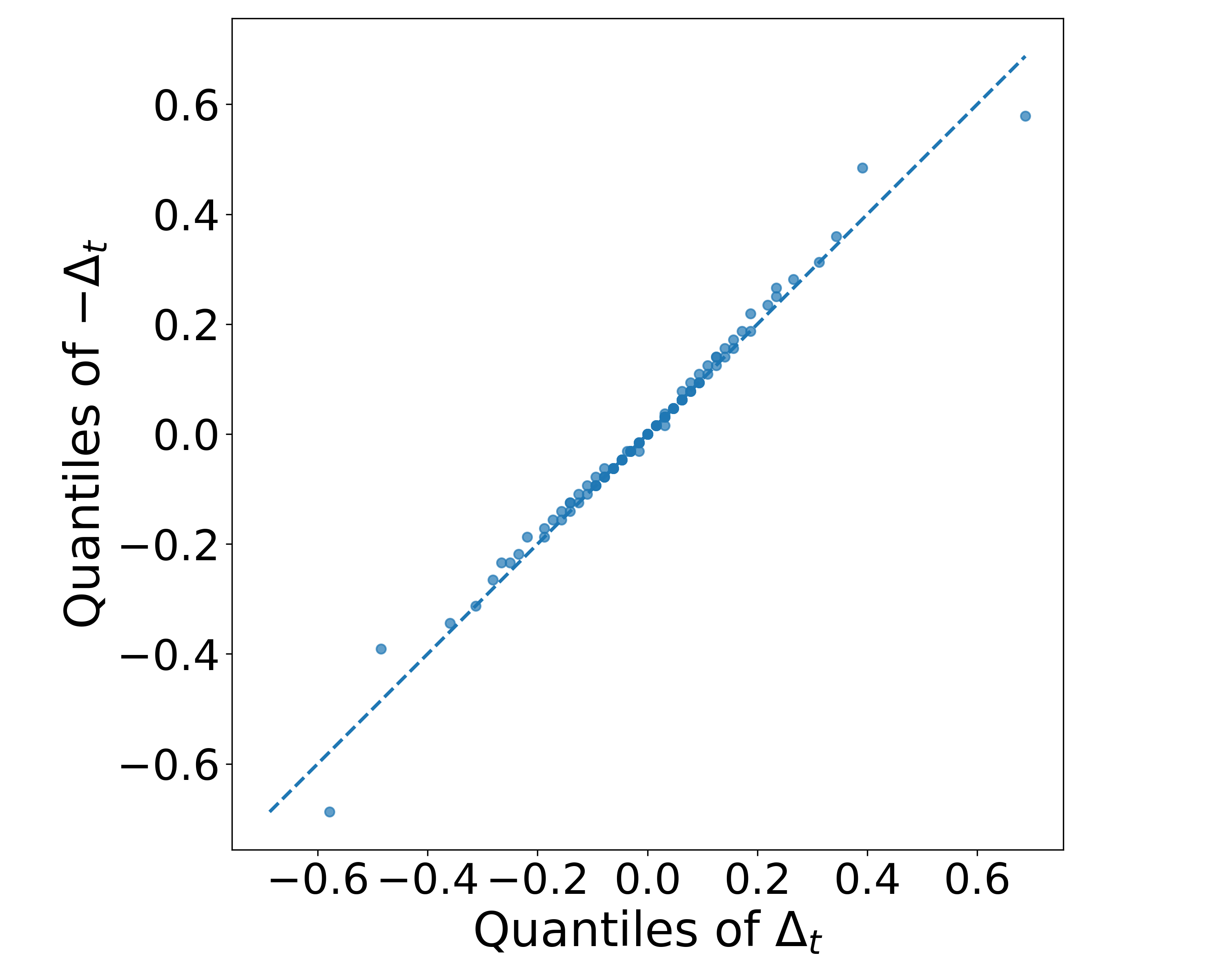}
    \caption{Quantile-Quantile (QQ) plot comparing the empirical quantiles of $\Delta_t$ and $-\Delta_t$ for negative POPE queries. The close alignment with the identity line across the full range of quantiles indicates approximate distributional symmetry of the mirror statistic, providing empirical support for the mirror-symmetry assumption used in Eq.~\eqref{eq:fdr_math}.}
    \label{fig:mirror_qq}
\end{figure}

\paragraph{Quantitative diagnostics.}
We further report numerical symmetry diagnostics. The empirical mean of $\Delta_t$ is close to zero ($\mathrm{mean}(\Delta_t) = 0.0012$), and the Kolmogorov-Smirnov test between $\Delta_t$ and $-\Delta_t$ yields a statistic of $0.0167$ with $p = 0.99$, failing to reject the null hypothesis that the two distributions are identical. These results indicate no detectable asymmetry at conventional significance levels.

\paragraph{Implications for FDR estimation.}
While LVLMs are highly nonlinear and do not strictly satisfy classical assumptions of mirror-based multiple testing, the above results suggest that, under symmetric feature perturbations, the induced mirror statistic for visually ungrounded objects are empirically well-approximated by a symmetric distribution. This empirical symmetry supports the use of Eq.~\eqref{eq:fdr_math} as a reliable plug-in estimator for the false discovery rate in our decoding-time selection procedure.

\subsection{Implementation Details: Object Sets and Token Span Construction}
\label{app:object_span}

This appendix provides a precise description of how object sets and token spans
are defined across tasks, and how object-level mirror statistic are constructed
for false discovery rate (FDR) control.

\paragraph{Object set definition.}
For a given image and prompt, $\method$ performs statistical testing at the object
level.
Let $\mathcal{S}$ denote the set of candidate objects associated with the input.
The definition of $\mathcal{S}$ is task-dependent.
For closed-set object existence benchmarks such as POPE and MME, $\mathcal{S}$
is directly given by the queried object categories provided by the benchmark
(e.g., \emph{fork}, \emph{bus}, \emph{zebra}).
For captioning evaluation with CHAIR, $\mathcal{S}$ consists of object categories
extracted from the generated caption using the standard CHAIR evaluation pipeline,
which matches noun phrases against the MSCOCO object vocabulary and its synonym
list.

\paragraph{Object mention identification.}
For each object $y_t \in \mathcal{S}$, we identify its occurrences in the generated
text by matching the canonical object name and its associated synonyms to the
generated sequence.
All matching is performed at the tokenizer level of the evaluated LVLM, ensuring
a deterministic and reproducible mapping.
Each matched occurrence corresponds to a contiguous span of subword tokens.

\paragraph{Token span construction.}
\textcolor{black}{
For captioning, let $\mathcal{T}_i$ denote the set of
token indices corresponding to all occurrences of
object category $i$ in the generated text.
If an object appears multiple times, $\mathcal{T}_i$
is the union of its corresponding token spans.
Token-level mirror statistics $\Delta_t$ are computed
for each associated token position $t$.
The constituent tokens are not treated as independent
object discoveries; instead, they are grouped into
a single object-level decision.
}

\textcolor{black}{
\paragraph{Object-level aggregation.}
For objects represented by multiple tokens, we summarize
the evidence using the least-supported token:
\begin{equation}
    \Delta_i = \min_{t\in\mathcal{T}_i}\Delta_t.
    \label{eq:object_aggregation}
\end{equation}
At the selection threshold $T_q$, object $i$ is retained
only if
\begin{equation}
    \Delta_i \ge T_q
    \quad\Longleftrightarrow\quad
    \Delta_t \ge T_q
    \;\; \text{for all } t\in\mathcal{T}_i.
    \label{eq:object_selection}
\end{equation}
This rule prevents an object from being retained solely
because one token has strong visual evidence while
another token in the associated span is weakly supported.
The grouped tokens contribute a single object-level
decision.
}

\textcolor{black}{
\paragraph{Scope of FDR control.}
Token-level FDR control does not, in general, automatically
imply object-level FDR control under an arbitrary
token-to-object mapping.
For POPE, the one-object-one-decision structure closely
aligns the tested decision with the evaluated object-level
hypothesis.
For multi-token objects and CHAIR-style caption evaluation,
a general provable mapping requires additional assumptions
on the parser, grouping function, and object-level null.
}

\textcolor{black}{
Accordingly, the guarantee is stated at the level at
which the testing procedure is applied, under its
required assumptions.
The object-level improvements observed in our experiments
provide empirical evidence of transfer to standard
object-hallucination metrics, rather than a universal
object-level FDR theorem for arbitrary structured outputs.
A hierarchical procedure that combines within-span
error control with FDR control across object groups
is a possible direction for future work.
}

\paragraph{Mirror statistic.}
For each token $y_t \in \mathcal{S}$, we define a token-level mirror statistic $\Delta_t$ based on the logit differences under the original and perturbed visual inputs, as described in Eq.~\eqref{eq:mirror}. This formulation operates directly at the token level and avoids the need for additional aggregation. As a result, the mirror-symmetry property required for valid FDR control is preserved. Object-level decisions can be derived from token-level statistic during evaluation by grouping tokens corresponding to the same object.


\paragraph{FDR and power computation.}
False discovery rate and power are computed at the image level over the object
set $\mathcal{S}$ using $\{\Delta_t\}_{y_t \in \mathcal{S}}$, rather than over
individual tokens.
This guarantees that the statistical testing procedure is directly aligned with
object-level hallucination evaluation protocols such as POPE, CHAIR, and MME.

\subsection{Statistical Dependence across Tokens and Objects}
\label{app:dependence}

Mirror statistics are computed separately for each
candidate decision, but are not assumed to be
mutually statistically independent.
Shared images, scene context, and decoding histories
can induce dependence among the statistics.

\paragraph{Autoregressive dependence.}
At decoding step $t$, the clean and mirror-view logits
are evaluated using the same fixed history
$\mathbf{y}_{<t}$.
The paired comparison is therefore made conditional
on a shared decoding history.
However, this does not imply independence of the
mirror statistics across decoding steps.

\paragraph{Null-tail symmetry and weak dependence.}
Let $\mathcal{S}_0$ index the null testing units.
For a candidate threshold $s>0$, define the
null-tail indicators
\begin{equation}
I_t^{+}(s)
=
\mathbf{1}\{\Delta_t\ge s\},
\qquad
I_t^{-}(s)
=
\mathbf{1}\{\Delta_t\le -s\}.
\label{eq:null_tail_indicators}
\end{equation}
The analysis requires approximate marginal symmetry,
\begin{equation}
\mathbb{E}[I_t^{+}(s)]
\approx
\mathbb{E}[I_t^{-}(s)],
\qquad t\in\mathcal{S}_0,
\label{eq:null_tail_balance}
\end{equation}
together with a weak-dependence condition such as
\begin{equation}
\sum_{t,u\in\mathcal{S}_0}
\left|
\operatorname{Cov}
\bigl(I_t^{\pm}(s),I_u^{\pm}(s)\bigr)
\right|
=
o\bigl(|\mathcal{S}_0|^2\bigr),
\label{eq:weak_dependence}
\end{equation}
uniformly over the candidate thresholds.
Here, the condition applies separately to the
positive and negative tail indicators as the
number of null testing units increases.

This condition allows correlations induced by
shared visual inputs and contextual information,
provided that their aggregate contribution
satisfies Eq.~\eqref{eq:weak_dependence}.
Local or block dependence can therefore be
compatible with the condition, whereas pervasive
dependence under which most null statistics move
together can violate it.

\paragraph{Dependence among object-level decisions.}
Hallucinated objects such as a dog, a car, and
a bicycle may be correlated through the same
scene context. Such correlation does not, by
itself, violate the weak-dependence condition.
The relevant requirement concerns concentration
of aggregate null-tail counts, rather than
independence of every pair of object statistics.

When testing is performed on object-level
statistics, the symmetry and dependence conditions
must hold for those statistics.
Thus, separate computation should not be
interpreted as statistical independence.
Within-image dependence diagnostics and end-to-end
empirical FDR calibration assess the plausibility
of these conditions in the evaluated settings.

\subsection{Implementation Details for Hallucination Evaluations} \label{app:implementation}
We evaluate the effectiveness of our methods on several state-of-the-art LVLMs, including LLaVA-v1.5 (7B and 13B) \cite{liu2023visual}, InstructBLIP (7B and 13B)~\citep{dai2023instructblip}, Qwen-VL (7B) \cite{bai2023Qwen}, LLaVA-OneVision-7B~\citep{li2024llava}, Qwen2.5-VL-7B~\citep{bai2025qwen25vl}, and InternVL3-8B~\citep{zhu2025internvl3}. We compare our methods with three state-of-the-art decoding methods, including VCD \cite{leng2024mitigating}, MARINE \cite{zhao2024mitigating}, and AGLA \cite{an2025mitigating}.
All evaluations were repeated over 3000 randomized trials. In each trial, we fix the model outputs and compute the evaluation metrics using random sampling procedures (e.g., POPE query sampling), rather than re-running full model inference. This allows efficient and stable estimation of mean and variance.
We followed the suggested settings in their respective papers and released codes to ensure fair comparison. For the POPE~\citep{li2023evaluating} datasets, $\alpha$ is set to 2 and $\beta$ is set to 0.5. For the proposed methods, the target level for FDR control is set to 0.1. For the CHAIR~\citep{rohrbach2018object}, $\alpha$ is set to 2 and $\beta$ is set to 0.5. For the MME~\citep{fu2025mme} dataset, we set $\alpha$ to 2 and $\beta$ to 0.5 for LLaVA-v1.5, and $\alpha$ and $\beta$ are set to 0.1 for InstructBLIP.In addition, the hyperparameter for VCD \citep{leng2024mitigating}, MARINE~\citep{zhao2024mitigating}, and AGLA~\citep{an2025mitigating} are reported in Table~\ref{tab:vcd_hyperparameters}, Table~\ref{tab:marine_hyperparameters}, and Table~\ref{tab:AGLA_hyperparameters} respectively. We strictly followed the original implementations and default hyperparameters described in their papers to reproduce each baseline's results. 

\begin{table}[ht]
\centering
\caption{VCD~\citep{leng2024mitigating} Hyperparameter Settings.}
\label{tab:vcd_hyperparameters}
\begin{tabular}{l|c}
\toprule
\textbf{Parameters} & \textbf{Value} \\
\hline
Amplification Factor $\alpha$
& 1
 \\
Adaptive Plausibility Threshold $\beta$
&
0.1
\\
Diffusion Noise Step
&
500
\\
\bottomrule
\end{tabular}
\end{table}

\begin{table}[ht]
\centering
\caption{MARINE~\citep{zhao2024mitigating} Hyperparameter Settings.}
\label{tab:marine_hyperparameters}
\begin{tabular}{l|c}
\toprule
\textbf{Parameters} & \textbf{Value} \\
\hline
Guidance Strength
& 0.7
 \\
Score Threshold for DERT
&
0.95
\\
Detect Threshold for RAM++
&
0.68
\\
\bottomrule
\end{tabular}
\end{table}

\begin{table}[ht]
\centering
\caption{AGLA~\citep{an2025mitigating} Hyperparameter Settings.}
\label{tab:AGLA_hyperparameters}
\begin{tabular}{l|c}
\toprule
\textbf{Parameters} & \textbf{Value} \\
\hline
Weighting Factor $\alpha$
& 2
 \\
Adaptive Plausibility Constraint Factor $\beta$
&
0.5
\\
\bottomrule
\end{tabular}
\end{table}

Key factors that potentially affect the hallucination evaluation outcomes, including the evaluation dataset and prompt template, LVLM’s sampling strategy and batched generation techniques, data splitting $\tau$, and FDR target level $q$. The hyperparameter settings for $\method$ and overall experiment settings are shown in Table~\ref{tab:CORAL_hyperparameters} and Table~\ref{tab:batch_size}.

\begin{table}[ht]
\centering
\caption{$\method$ Hyperparameter Settings. The settings are fixed depending on the question-answer tasks.}
\label{tab:CORAL_hyperparameters}
\begin{tabular}{l|c}
\toprule
\textbf{Parameters} & \textbf{Value} \\
\hline
Data Splitting Factor $\tau$
& 0.1
 \\
FDR Thresholding $q$ (false positive level in an image)
&
0.1
\\
\bottomrule
\end{tabular}
\end{table}

\begin{table}[t]
\centering
\caption{Batch size settings for LVLM generation across different models. Unless otherwise noted, the batch size is fixed for each model throughout all experiments. To improve evaluation efficiency, we adopt batched generation. When the LVLM does not explicitly specify a padding strategy for inference, we apply left padding to avoid potential negative effects of batched generation.}
\label{tab:batch_size}
\begin{tabular}{lc}
\toprule
\textbf{Model} & \textbf{Batch Size} \\
\midrule
LLaVA-v1.5 & 4 \\
Qwen-VL & 16 \\
InstructBLIP & 16 \\
LLaVA-OneVision-7B & 4 \\
Qwen2.5-VL-7B & 16 \\
InternVL3-8B & 8 \\
\bottomrule
\end{tabular}
\end{table}

\paragraph{Experiments for POPE Evaluations} POPE is a flexible approach to evaluating hallucinations in LVLMs, which formulates a binary classification task by prompting LVLMs with questions such as “Is there a keyboard in this image?” to answer “yes” or “no”. Following~\citep{leng2024mitigating}, the POPE benchmark aggregates data from three distinct sources: MSCOCO~\citep{lin2014microsoft}, A-OKVQA~\citep{schwenk2022okvqa}, and GQA~\citep{hudson2019gqa}. It involves 500 images from each dataset under each sampling setting and formulates 6 questions per image, culminating in a total of 27,000 query answer pairs from the development sets of these datasets. We reported the results on the MSCOCO dataset in Table~\ref{tab:pope_coco1_random}, results on the A-OKVQA dataset in Table~\ref{tab:pope_vaq}, and results on the A-OKVQA dataset in Table~\ref{tab:pope_gqa}.

\paragraph{Experiments for CHAIR Evaluations} 

\textit{Caption Hallucination Assessment with Image Relevance} (CHAIR) \cite{rohrbach2018object} quantifies object hallucinations in {image captions} by comparing generated objects to ground-truth ones. 
We adopt the same prompt “Generate a short caption of the image.” as utilized by ~\citeauthor{li2023evaluating}~(\citeyear{li2023evaluating}). The maximum token length is 64, and the sampling approach with random seed of 242. For the calculation of CHAIR metrics, we referenced the 80 object categories annotated in the MSCOCO dataset, following~\citep{rohrbach2018object}. 
\begin{align*}
    \text{CHAIR}_I = \frac{|\{\text{hallucinated objects}\}|}{|\{\text{all mentioned objects}\}|}, \quad 
    \text{CHAIR}_S = \frac{|\{\text{captions with hallucinated objects}\}|}{|\{\text{all captions}\}|}
\end{align*}
Besides, we employed the synonym list from~\citep{lu2018neural} to align synonymous words in the generated text with MSCOCO object categories. And we reported the result in Table~\ref{tab:chair}.  Following previous work~\citep{zhao2024mitigating}, we randomly select 500 images from MSCOCO~\citep{lin2014microsoft} and use $\text{CHAIR}_I$ and $\text{CHAIR}_S$. Additionally, we incorporate the overall power (Eq.~\eqref{eq:power})  to evaluate whether the descriptions accurately include the necessary visual content from the image. $\method$ consistently reduces both sentence-level and instance-level hallucination rates while maintaining strong power across all models.

\begin{table}[ht]
\centering
\caption{Evaluation with CHAIR score across multiple LVLM architectures.
Lower $C_S$ and $C_I$ indicate fewer hallucinated objects, while higher power indicates better retention of visually grounded objects.
\textbf{Bold} indicates the best result. }
\label{tab:chair}
\setlength{\tabcolsep}{4pt}
\renewcommand{\arraystretch}{1.15}
{\large
\resizebox{\columnwidth}{!}{%
\begin{tabular}{lcccccccccccccccccc}
\toprule
\multirow{2}{*}{Method}
& \multicolumn{3}{c}{LLaVA-v1.5}
& \multicolumn{3}{c}{Qwen-VL}
& \multicolumn{3}{c}{InstructBLIP}
& \multicolumn{3}{c}{LLaVA-OneVision-7B}
& \multicolumn{3}{c}{Qwen2.5-VL-7B}
& \multicolumn{3}{c}{InternVL3-8B} \\

\cmidrule(lr){2-4}\cmidrule(lr){5-7}\cmidrule(lr){8-10}
\cmidrule(lr){11-13}\cmidrule(lr){14-16}\cmidrule(lr){17-19}

& $C_S\downarrow$ & $C_I\downarrow$ & Power$\uparrow$
& $C_S\downarrow$ & $C_I\downarrow$ & Power$\uparrow$
& $C_S\downarrow$ & $C_I\downarrow$ & Power$\uparrow$
& $C_S\downarrow$ & $C_I\downarrow$ & Power$\uparrow$
& $C_S\downarrow$ & $C_I\downarrow$ & Power$\uparrow$
& $C_S\downarrow$ & $C_I\downarrow$ & Power$\uparrow$ \\

\midrule

Regular
& 9.2 & 5.1 & 94.8
& 9.5 & 19.2 & 80.7
& 8.7 & 4.9 & 95.2
& 8.8 & 4.6 & 95.4
& 8.2 & 18.1 & 81.9
& 5.0 & 3.2 & 96.8 \\

VCD (\citeyear{leng2024mitigating})
& 7.8 & 4.5 & 95.3
& 7.4 & 18.5 & 81.3
& 7.3 & 4.1 & 95.9
& 7.3 & 4.1 & 95.9
& 6.8 & 17.4 & 82.6
& 2.4 & 1.5 & 98.5 \\

MARINE (\citeyear{zhao2024mitigating})
& 6.9 & 3.8 & 96.1
& 6.3 & 14.5 & 84.7
& 6.2 & 3.0 & 97.0
& 6.2 & 3.0 & 97.0
& 5.9 & 13.8 & 86.2
& 2.2 & \textbf{1.3} & \textbf{98.7} \\

AGLA (\citeyear{an2025mitigating})
& 7.5 & 4.2 & 95.8
& 6.1 & 12.4 & 87.2
& 7.0 & 3.8 & 96.2
& 7.0 & 3.8 & 96.2
& 5.6 & 11.2 & 88.8
& 2.3 & 1.6 & 98.4 \\

\rowcolor{blue!8}
$\method$ (Ours)
& \textbf{5.8} & \textbf{3.1} & \textbf{96.9}
& \textbf{4.5} & \textbf{11.0} & \textbf{88.9}
& \textbf{5.0} & \textbf{2.6} & \textbf{97.4}
& \textbf{5.0} & \textbf{2.6} & \textbf{97.4}
& \textbf{3.8} & \textbf{10.8} & \textbf{89.2}
& \textbf{1.8} & \textbf{1.3} & \textbf{98.7} \\

\bottomrule
\end{tabular}
}
}
\end{table}

\paragraph{Experiments for MME Evaluations} Similar to the POPE dataset, the MME dataset~\citep{fu2025mme} contains only two types of answers (i.e., Yes or No). For hallucination-related tasks, MME likewise relies on \textit{Yes-or-No} judgments, yielding instance-level correctness outcomes that can be aggregated for performance comparison.  Following the setting in their original paper, we use the sum of accuracy and accuracy+ as the final score, where accuracy is calculated based on each question, and accuracy+ is calculated based on each image where both of the two questions need to be answered correctly. So accuracy+ is a stricter measurement that can better reflect the comprehensive understanding degree of the model. We reported our results in Table~\ref{tab:mme}.

\begin{table*}[h]
\centering
\caption{Evaluation with MME score across multiple LVLM architectures on MSCOCO with 3000 random replications.
We report the mean $\pm$ standard deviation. \textbf{Bold} indicates the best result and \uline{underline} indicates the second-best result.}
\label{tab:mme}
\setlength{\tabcolsep}{4pt}
\renewcommand{\arraystretch}{1.15}
\resizebox{\textwidth}{!}{%
\begin{tabular}{llccccc}
\toprule
\multirow{2}{*}{Model} & \multirow{2}{*}{Method}
& \multicolumn{3}{c}{Attribute}
& \multicolumn{1}{c}{Relation}
& \multirow{2}{*}{Total $\uparrow$} \\
\cmidrule(lr){3-5} \cmidrule(lr){6-6}
&
& Existence $\uparrow$
& Count $\uparrow$
& Color $\uparrow$
& Position $\uparrow$
& \\
\midrule

\multirow{5}{*}{LLaVA-v1.5}
& Regular
& 175.67 ($\pm$ 7.51) & 124.67 ($\pm$ 19.59) & 151.00 ($\pm$ 10.45) & 114.00 ($\pm$ 9.32) & 565.33 ($\pm$ 33.92) \\
& VCD (\citeyear{leng2024mitigating})
& 184.66 ($\pm$ 6.81) & 138.33 ($\pm$ 15.68) & 153.00 ($\pm$ 7.58) & 128.67 ($\pm$ 7.21) & 604.66 ($\pm$ 18.76) \\
& MARINE (\citeyear{zhao2024mitigating})
& 190.53 ($\pm$ 7.26) & \uline{154.43 ($\pm$ 16.01)} & 166.34 ($\pm$ 6.96) & \uline{130.28 ($\pm$ 8.01)} & 641.58 ($\pm$ 17.47) \\
& AGLA (\citeyear{an2025mitigating})
& \textbf{195.00 ($\pm$ 7.32)} & 153.89 ($\pm$ 16.32) & \textbf{167.67 ($\pm$ 6.42)} & 129.44 ($\pm$ 7.81) & \uline{646.00 ($\pm$ 15.96)} \\
\rowcolor{blue!8}
& $\method$ (Ours)
& \uline{194.43 ($\pm$ 8.38)} & \textbf{157.41 ($\pm$ 15.11)} & \uline{167.34 ($\pm$ 7.33)} & \textbf{131.19 ($\pm$ 7.58)} & \textbf{650.37 ($\pm$ 18.12)} \\
\midrule

\multirow{5}{*}{Qwen-VL}
& Regular
& 155.00 ($\pm$ 3.54) & 127.67 ($\pm$ 13.36) & 173.00 ($\pm$ 9.75) & 131.67 ($\pm$ 7.73) & 587.33 ($\pm$ 31.06) \\
& VCD (\citeyear{leng2024mitigating})
& 156.00 ($\pm$ 6.25) & 131.00 ($\pm$ 6.19) & 181.67 ($\pm$ 5.14) & 128.00 ($\pm$ 3.61) & 596.67 ($\pm$ 11.61) \\
& MARINE (\citeyear{zhao2024mitigating})
& 164.20 ($\pm$ 6.73) & \textbf{136.55 ($\pm$ 6.24)} & 185.79 ($\pm$ 6.21) & 132.37 ($\pm$ 7.76) & 618.91 ($\pm$ 11.88) \\
& AGLA (\citeyear{an2025mitigating})
& \uline{165.78 ($\pm$ 5.28)} & 134.18 ($\pm$ 7.14) & \uline{187.12 ($\pm$ 5.33)} & \uline{133.17 ($\pm$ 7.77)} & \uline{620.25 ($\pm$ 13.42)} \\
\rowcolor{blue!8}
& $\method$ (Ours)
& \textbf{166.00 ($\pm$ 6.58)} & \uline{135.21 ($\pm$ 7.09)} & \textbf{188.22 ($\pm$ 7.76)} & \textbf{134.00 ($\pm$ 5.49)} & \textbf{623.43 ($\pm$ 12.67)} \\
\midrule

\multirow{5}{*}{InstructBLIP}
& Regular
& 141.00 ($\pm$ 13.97) & \textbf{75.33 ($\pm$ 14.16)} & 97.33 ($\pm$ 16.94) & \textbf{66.67 ($\pm$ 3.91)} & 380.33 ($\pm$ 40.20) \\
& VCD (\citeyear{leng2024mitigating})
& 170.00 ($\pm$ 11.55) & 61.67 ($\pm$ 8.47) & 114.44 ($\pm$ 11.27) & 57.22 ($\pm$ 6.73) & 403.33 ($\pm$ 13.36) \\
& MARINE (\citeyear{zhao2024mitigating})
& \textbf{189.43 ($\pm$ 11.88)} & 63.77 ($\pm$ 7.80) & \uline{118.67 ($\pm$ 12.15)} & 66.00 ($\pm$ 7.25) & \textbf{437.87 ($\pm$ 13.42)} \\
& AGLA (\citeyear{an2025mitigating})
& 180.00 ($\pm$ 12.11) & 63.33 ($\pm$ 7.39) & \textbf{119.44 ($\pm$ 12.11)} & 65.56 ($\pm$ 8.48) & 428.33 ($\pm$ 12.81) \\
\rowcolor{blue!8}
& $\method$ (Ours)
& \uline{182.03 ($\pm$ 12.08)} & \uline{64.17 ($\pm$ 8.53)} & 118.25 ($\pm$ 12.76) & \uline{66.43 ($\pm$ 7.83)} & \uline{431.18 ($\pm$ 12.30)} \\

\bottomrule
\end{tabular}
}
\end{table*}

\paragraph{Experiments for Latency Analysis} \label{app:latency}
Experiment setting for latency analysis. We compared our method with existing baselines in terms of the trade-off between inference cost and the effectiveness of reducing object hallucinations, as shown in Figure~\ref{fig:latency}. For decoding methods such as VCD, AGLA, MARINE and our method, we measured the latency of LLaVA-v1.5 generating captions directly. We prompted the models with “Generate a short caption of the image.” on 500 MSCOCO images with a batch size of 1 and a maximum token length of 64, without any stopping criteria, using a single RXT 6000 GPU. Then latency was calculated as the ratio of the number of output tokens and encoding and generation time.

\begin{figure}[t]
    \centering
    \includegraphics[width=0.7\linewidth]{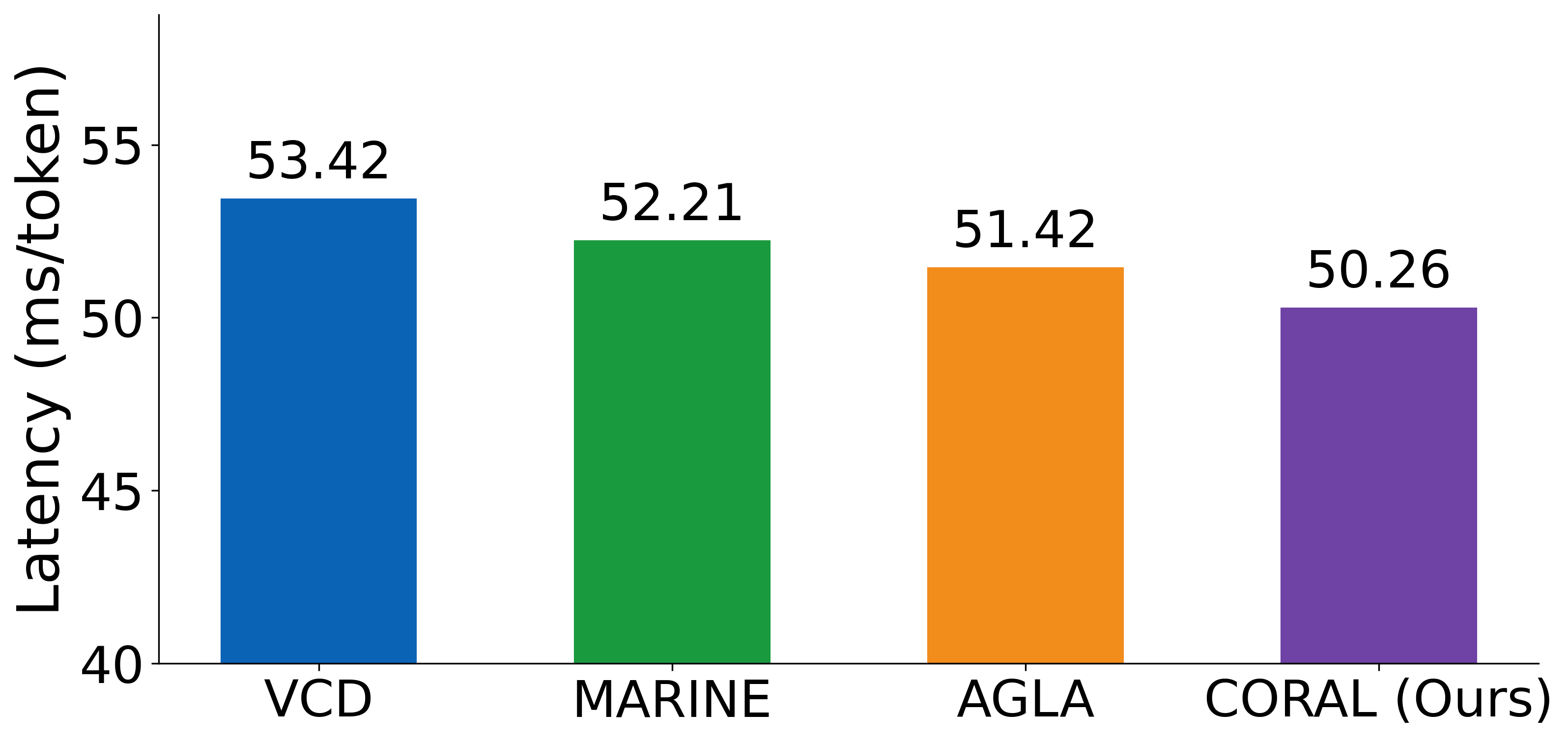}
    \caption{Inference latency comparisons. All measurements are conducted on a single NVIDIA RTX 6000 GPU with batch size 1, using identical input images and prompts.}
    \label{fig:latency}
\end{figure}

\subsection{Additional Experiment Results on FDR and Power}
Additionally, we report additional experimental results on false discovery rate (FDR) and power under a fixed target level $q = 0.1$. Throughout all experiments, we apply the same setting across three datasets. The target level $q = 0.1$ corresponds to controlling the proportion of false positives per image, such that the expected fraction of falsely retained hallucinated objects does not exceed $0.1$ for each image. This setting is used consistently for all datasets to ensure a fair and comparable evaluation. We reported the result for MSCOCO~\citep{lin2014microsoft} in Table~\ref{tab:mlt_coco_old}. Additional results on A-OKVQA~\citep{schwenk2022okvqa} and GQA~\citep{hudson2019gqa} under this setting are provided in Table~\ref{tab:mlt_vaq} and Table~\ref{tab:mlt_gqa}.

\begin{table*}[ht]
\centering
\caption{Evaluation of overall FDR control and overall power across multiple LVLM architectures on MSCOCO with 3000 random replications. Overall FDR and power denote averages of per-image FDR and power across images. \textbf{Bold} indicates the best result and \uline{underline} indicates the second-best result in each column.
}
\label{tab:mlt_coco_old}
\setlength{\tabcolsep}{3pt}
\renewcommand{\arraystretch}{1.15}
\resizebox{\textwidth}{!}{%

}
\vspace{-0.1in}
\end{table*}

\begin{table*}[ht]
\centering
\caption{Evaluation on overall FDR control and overall power across multiple LVLM architectures on \textbf{A-OKVQA} with 3000 random replications.
\textbf{Bold} indicates the best result and \uline{underline} indicates the second-best result in each column.}
\label{tab:mlt_vaq}
\setlength{\tabcolsep}{4pt}
\renewcommand{\arraystretch}{1.15}
\resizebox{\textwidth}{!}{%
%
}
\end{table*}

\begin{table*}[ht]
\centering
\caption{Evaluation on overall FDR control and overall power across multiple LVLM architectures on \textbf{GQA} with 3000 random replications.
\textbf{Bold} indicates the best result and \uline{underline} indicates the second-best result in each column.}
\label{tab:mlt_gqa}
\setlength{\tabcolsep}{4pt}
\renewcommand{\arraystretch}{1.15}
\resizebox{\textwidth}{!}{%
%
}
\end{table*}

\begin{table}[t]
\centering
\caption{Evaluation with POPE score across multiple LVLM architectures on \textbf{MSCOCO} dataset comparing our method with several baselines. \textbf{Bold} indicates the best result and \uline{underline} indicates the second-best result in each column.}
\label{tab:pope_coco1_random}

\setlength{\tabcolsep}{4pt}
\renewcommand{\arraystretch}{1}
\resizebox{\textwidth}{!}{
%
}
\end{table}

\begin{table*}[htbp]
\centering
\caption{Evaluation with POPE score across multiple LVLM architectures on \textbf{A-OKVQA} dataset comparing our method with several baselines with 3000 random replications. \textbf{Bold} indicates the best result and \uline{underline} indicates the second-best result in each column.}
\label{tab:pope_vaq}
\setlength{\tabcolsep}{4pt}
\renewcommand{\arraystretch}{1}
\resizebox{\textwidth}{!}{%
%
}
\end{table*}

\begin{table}[ht]
\centering
\caption{Evaluation with POPE score across multiple LVLM architectures on \textbf{GQA} dataset, comparing our method with several baselines with 3000 random replications. \textbf{Bold} indicates the best result and \uline{underline} indicates the second-best result in each column.
}
\label{tab:pope_gqa}
\setlength{\tabcolsep}{4pt}
\renewcommand{\arraystretch}{1}
\resizebox{\textwidth}{!}{%
%
}
\end{table}

\section{Experiments for Ablation Study}
\subsection{Effect of FDR Threshold}
We also examine the role of the false discovery rate (FDR) control procedure on the MSCOCO dataset across three models. In this ablation study, we remove the adaptive threshold selection based on the estimated FDR and instead apply fixed thresholds chosen on a validation set or select a fixed proportion of tokens. This variant preserves the mirror statistic but disables explicit error control. The results show that without FDR-based thresholding, the empirical FDR varies substantially across experimental settings on MSCOCO, often exceeding the target level. In contrast, the full method consistently maintains empirical FDR close to the desired rate while achieving comparable or better hallucination reduction. These findings indicate that the performance gains of our approach are not solely attributable to the mirror statistic but critically rely on the FDR control procedure, which provides explicit and stable error control. The corresponding results are reported in Figure~\ref{fig:q_level}.
\begin{figure}
    \centering
    \includegraphics[width=\linewidth]{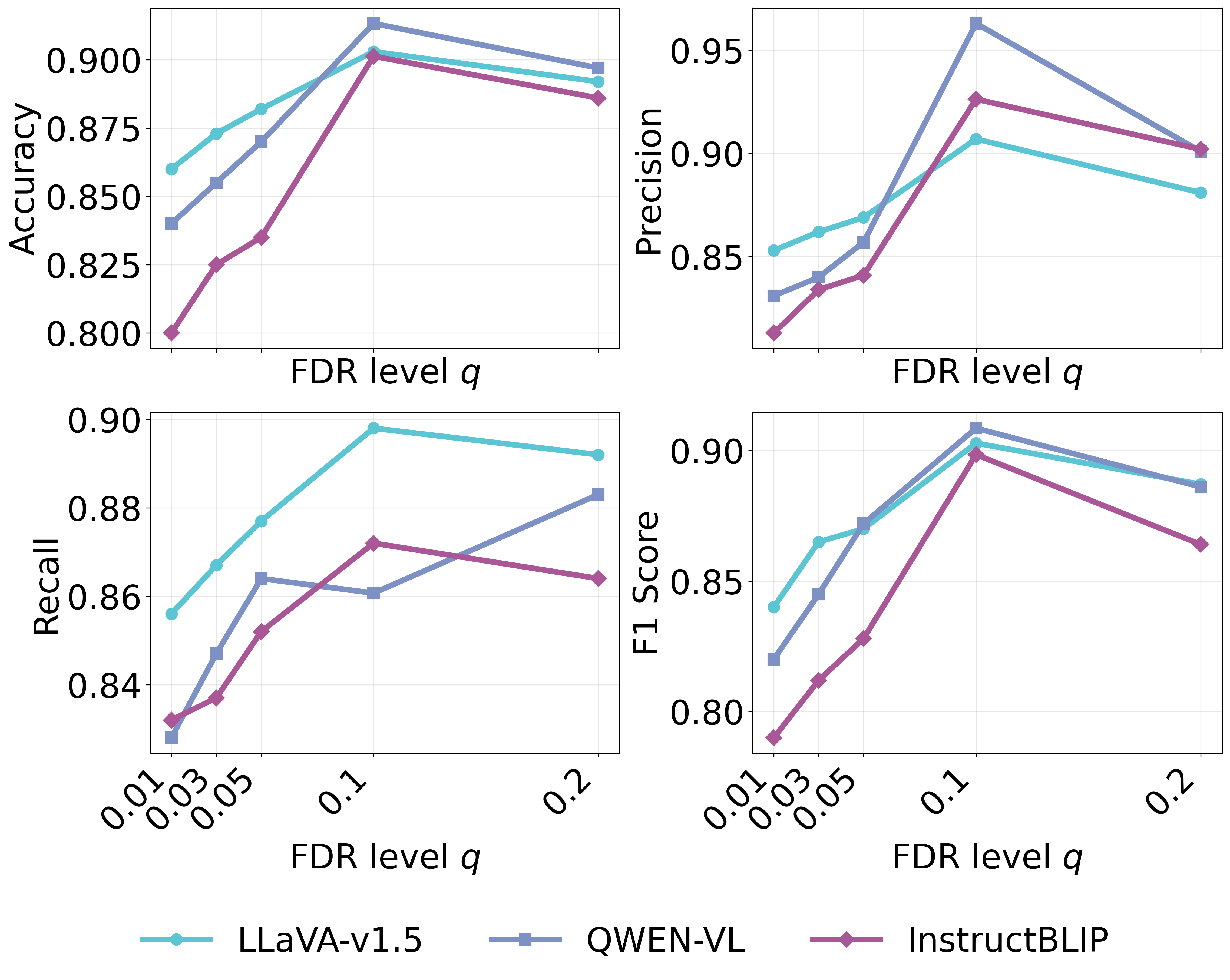}
    \caption{Ablation study on the effect of FDR target level ($q$) on the performance of LLaVA-v1.5, Qwen-VL, InstructBLIP using POPE metrics with $q = \{0.01, 0.03, 0.05, 0.1, 0.2\}$.}
    \label{fig:q_level}
\end{figure}

\subsection{Calibration of the Perturbation Scale $\tau_v$}
\label{app:tau}

As described in the main text, the perturbation scale $\tau_v$ is calibrated using a lightweight sensitivity analysis rather than fine-grained hyperparameter optimization. Specifically, $\tau_v$ is selected based on a coarse sweep over a
representative range of values on a small subset disjoint from the
reported evaluation images, with the target false discovery rate (FDR)
level fixed at $q=0.1$. The goal of this calibration is to identify a stable operating regime that maintains empirical FDR close to the target level while yielding robust power, rather than to optimize any task-specific metric.

Once selected, the perturbation scale $\tau_v$ is kept fixed across all datasets and experiments for each backbone, and no further tuning is performed on the test benchmarks. Figures~\ref{fig:ablation_fdr_tau} and~\ref{fig:ablation_pope_tau} report the corresponding sensitivity analysis. The results show that $\method$ exhibits smooth and consistent behavior across a wide range of $\tau_v$ values, indicating that the method is not sensitive to precise tuning of the perturbation scale and mitigating the risk of overfitting to any specific dataset or evaluation protocol.

\begin{figure}[ht]
    \centering
    \includegraphics[width=\linewidth]{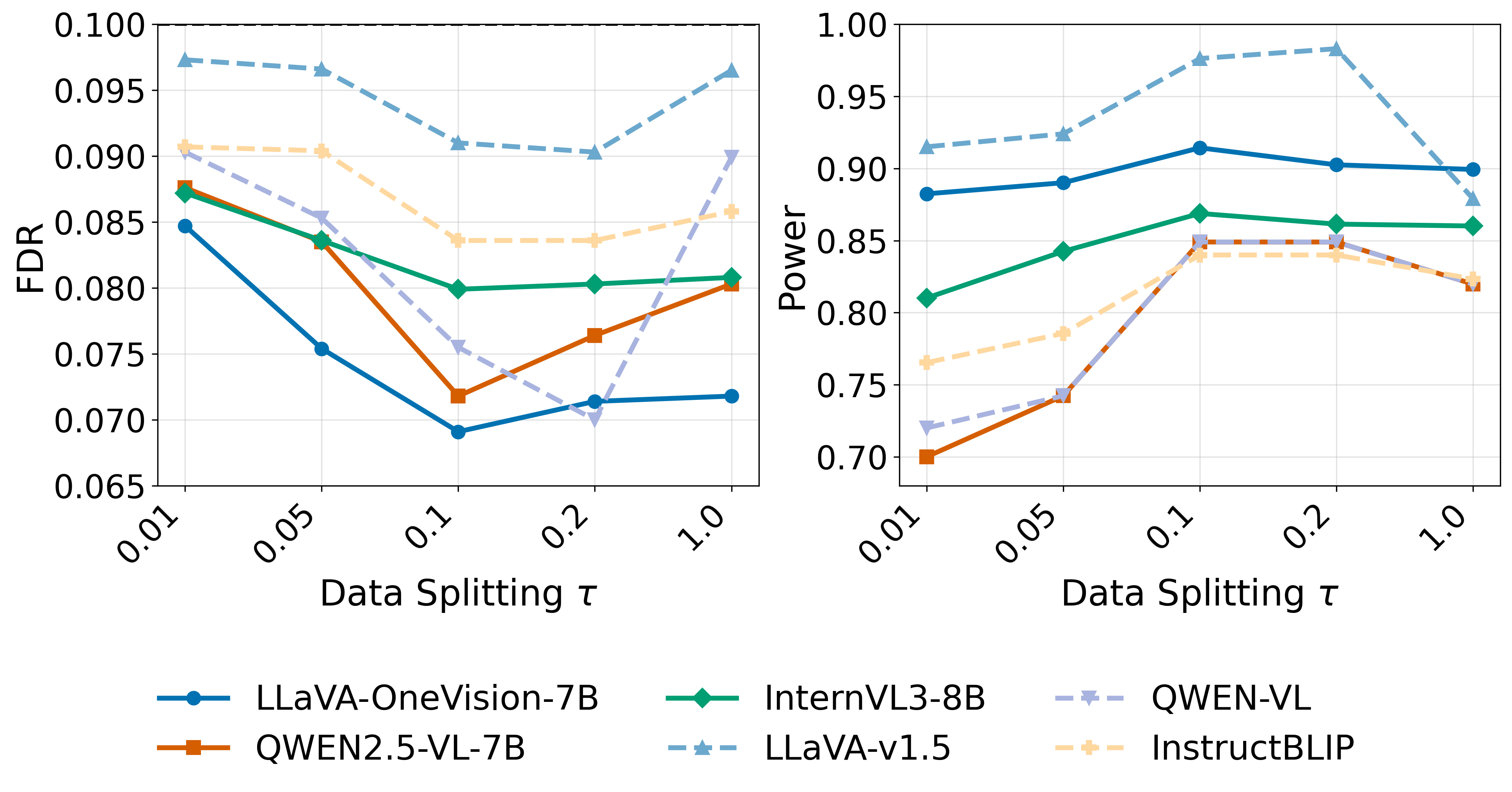}
    \caption{Sensitivity analysis of FDR and power with respect to the perturbation scale $\tau_v$. We report empirical FDR and power under different values of $\tau_v \in \{0.01, 0.05, 0.10, 0.20, 1.00\}$ while fixing the target FDR level at $q = 0.1$. The results show that $\method$ achieves a favorable trade-off at intermediate perturbation scales and remains robust across a wide range of $\tau_v$ values.}
    \label{fig:ablation_fdr_tau}
\end{figure}

\begin{figure}[ht]
    \centering
    \includegraphics[width=\linewidth]{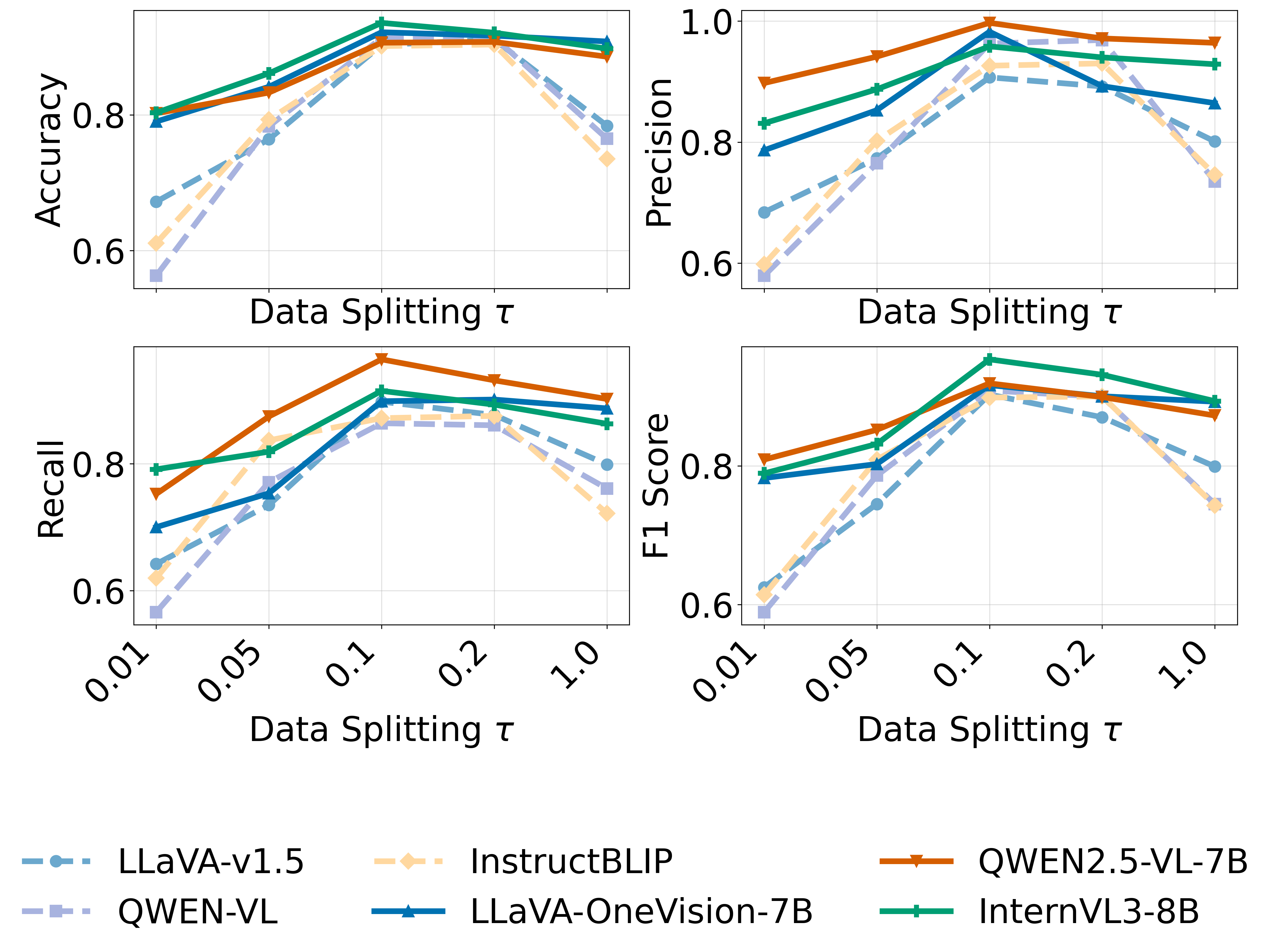}
    \caption{Sensitivity analysis of POPE accuracy and F1 with respect to the perturbation scale $\tau_v$. Performance is evaluated under different values of $\tau_v \in \{0.01, 0.05, 0.10, 0.20, 1.00\}$ while fixing the target FDR level at $q = 0.1$. The results indicate that $\method$ is not sensitive to precise tuning of $\tau_v$ and achieves stable performance across a broad range of perturbation scales.}
    \label{fig:ablation_pope_tau}
\end{figure}

\section{Boundary and Failure Cases.}
Figure~\ref{fig:failure_cases} presents representative failure
cases of CORAL, including false negatives for present objects,
confusion between visually similar categories such as a
streetlamp and a traffic light, difficulties recognizing
small or partially occluded objects such as a knife, and
confusion between related activities.
These examples illustrate that errors can remain when visual
evidence is weak or semantic distinctions are fine-grained.

One possible explanation is that perturbation responses for
grounded and non-grounded concepts are insufficiently
separated, limiting the discriminative ability of the mirror
statistic. However, the qualitative examples alone do not
establish this mechanism or imply that the corresponding
statistics lie near the selection threshold.
These cases complement the quantitative evaluation
by highlighting the limitations of CORAL. The hallucination
mitigation does not eliminate all unsupported predictions
and may also reject genuinely present objects.

\begin{figure}[ht]
    \centering
    \includegraphics[width=\linewidth]{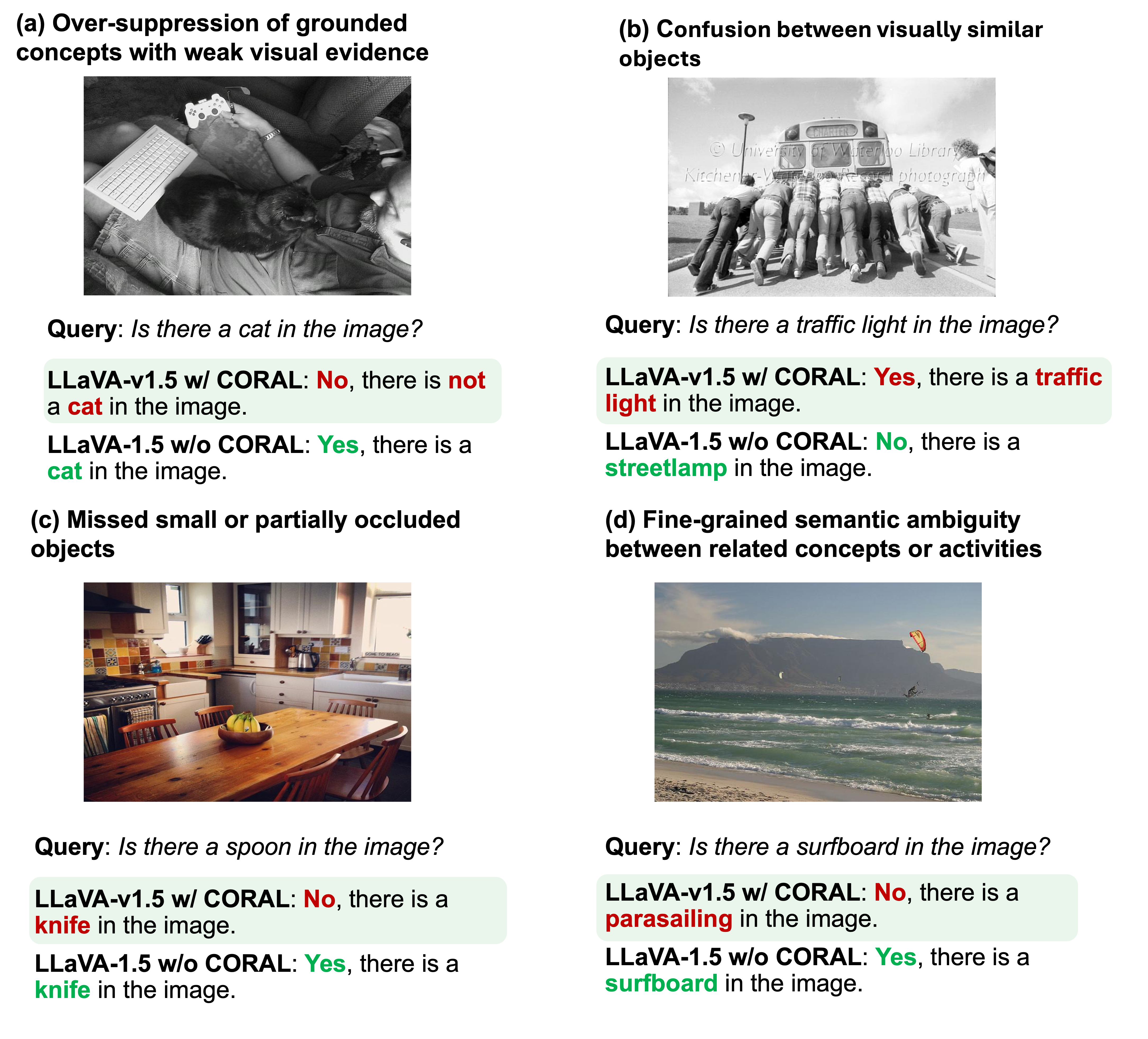}
    \caption{Representative failure cases of $\method$ on LLaVA-v1.5. Each example compares responses with and without $\method$ to the same image and query. The cases illustrate remaining errors in object recognition and activity description, including failure to recognize a present cat, an unsupported traffic-light prediction, an inconsistent response concerning a knife, and confusion involving a surfboard and parasailing. Red and green text highlight incorrect and correct response components, respectively.}
    \label{fig:failure_cases}
\end{figure}


\end{document}